\newtheorem{thm}{Theorem}
\newtheorem{definition}{Definition}
\newcounter{ToDo}
\newcounter{gaocomm}
\newcounter{Note}
\definecolor{blue-violet}{rgb}{0.54, 0.17, 0.89}
\definecolor{mygreen}{rgb}{0.0, 0.5, 0.0}
\definecolor{awesome}{rgb}{1.0, 0.13, 0.32}
\definecolor{bostonuniversityred}{rgb}{0.8, 0.0, 0.0}
\begin{document}

\newcommand{\point}{
    \raise0.7ex\hbox{.}
    }

\title{Partial Sum Minimization of Singular Values Representation on Grassmann Manifolds} 

\author{Boyue~Wang, 
        Yongli~Hu~\IEEEmembership{Member,~IEEE,} Junbin~Gao, Yanfeng~Sun~\IEEEmembership{Member,~IEEE,}
        and Baocai~Yin  ~\IEEEmembership{Member,~IEEE} 
\IEEEcompsocitemizethanks{
\IEEEcompsocthanksitem  Boyue Wang, Yongli Hu and Yanfeng Sun are with Beijing Municipal Key Lab of Multimedia and Intelligent Software Technology, College of Metropolitan Transportation, Beijing University of Technology, Beijing 100124, China. 
E-mail: boyue.wang@emails.bjut.edu.cn, \{huyongli, yfsun\}@bjut.edu.cn
\IEEEcompsocthanksitem Junbin Gao is with the Discipline of Business Analytics, The University of Sydney Business School, The University of Sydney, NSW 2006, Australia. \protect E-mail: junbin.gao@sydney.edu.au 
\IEEEcompsocthanksitem Baocai Yin is with the College of Computer Science and Technology, Faculty of Electronic Information and Electrical Engineering, Dalian University of Technology, Dalian 116620, China; and with Beijing Municipal Key Lab of Multimedia and Intelligent Software Technology at Beijing University of Technology, Beijing 100124, China. \protect E-mail: ybc@bjut.edu.cn
}
}


\IEEEcompsoctitleabstractindextext{%
\begin{abstract}
Clustering is one of the fundamental topics in data mining and pattern recognition. As a prospective clustering method, the subspace clustering has made considerable progress in recent researches, e.g., sparse subspace clustering (SSC) and low rank representation (LRR).
However, most existing subspace clustering algorithms are designed for vectorial data from linear spaces, thus not suitable for high dimensional data with intrinsic non-linear manifold structure. For high dimensional or manifold data, few research pays attention to clustering problems. The purpose of clustering on manifolds tends to cluster manifold-valued data into several groups according to the mainfold-based similarity metric. This paper proposes an extended LRR model for manifold-valued Grassmann data which incorporates prior knowledge by minimizing partial sum of singular values instead of the nuclear norm, namely Partial Sum minimization of Singular Values Representation (GPSSVR). The new model not only enforces the global structure of data in low rank, but also retains important information by minimizing only smaller singular values. To further maintain the local structures among Grassmann points, we also integrate the Laplacian penalty with GPSSVR.
The proposed model and algorithms are assessed on a public human face dataset, some widely used human action video datasets and a real scenery dataset. The experimental results show that the proposed methods obviously outperform other state-of-the-art methods.
\end{abstract}
\begin{IEEEkeywords}
Low Rank Representation, Partial Sum Minimization of Singular Values, Subspace Clustering, Grassmann Manifolds, Laplacian Matrix
\end{IEEEkeywords}}

\maketitle

\section{Introduction}

In fact, with the wide use of cheaper cameras
in many domains such as human action recognition, safety production detection and traffic jam detection, there are huge amount of video data that need to be processed efficiently. However, it is impossible to deal with so many videos with very limited labels.  
Thus unsupervised video clustering algorithms have attracted increasing interests recently  \cite{TuragaVeeraraghavanSrivastavaChellappa2011,ShiraziHarandiSandersonAlaviLovell2012,WangHuGaoSunYin2016} and it is urgently desired to achieve good clustering performance for real world videos. To achieve this goal, it is critical to explore a proper similarity for high dimensional data and build a proper clustering model based on the new similarity and new representation.

\begin{figure}
    \begin{center}
    \includegraphics[width=0.5\textwidth]{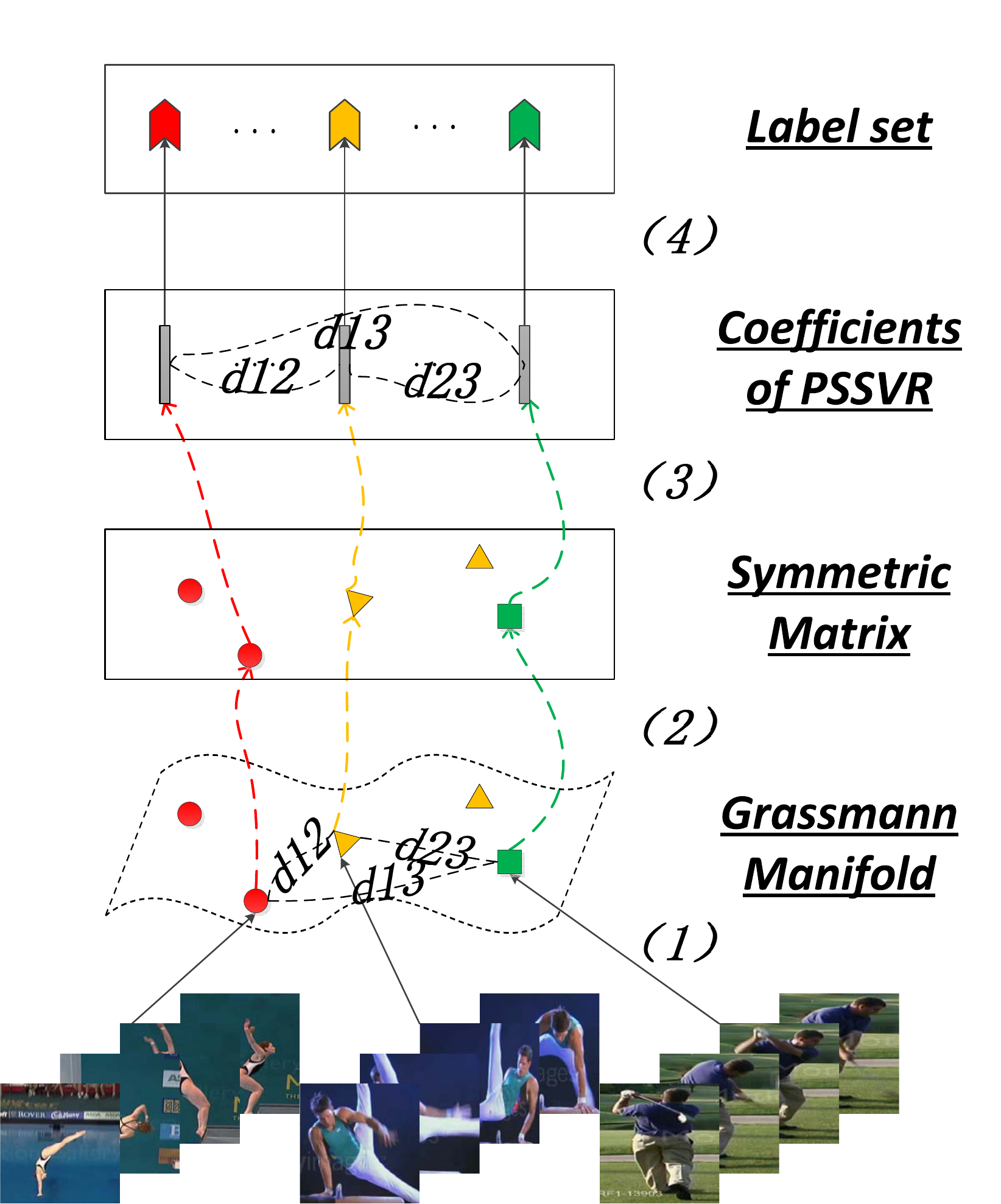}
    \end{center}
    \caption{(1) Image sets are represented as Grassmann points. (2) All Grassmann points are mapped into the symmetric matrices. (3) PSSVR model is formulated in symmetric matrix space and we constrain the coefficient matrix maintaining the inner structure of origin data. (4) Clustering by NCuts.}\label{Fig1}
\end{figure}

Research on clustering has made great progress in the last few decades, especially subspace clustering. However, most existing subspace clustering algorithms, e.g. SSC and LRR, are designed in Euclidean space, which are not suitable to perform high dimensional data with intrinsic non-linear manifold structure. In this paper, to explore the clustering problem of high dimension data, we intend to extend the classic LRR model onto manifold space. In the following, we will briefly review the original LRR and discuss its some limitations.

LRR has become one of the most successful self-expressive models for clustering vectorial data according to their subspace structures \cite{LiuLinSunYuMa2013,Guo2015,YinGaoLin2015,ZhangXuLiSun2015,YinGaoLinShiGuo2015,WangHuGaoSunYin2016}.
The core idea in the original LRR is based on the Rank Minimization principle which results in a non-convex problem. To provide a practical implementation for LRR, one employs the nuclear norm as a surrogate of the Rank Minimization regularization.

However, indirectly minimizing the rank of the coefficient matrix by minimizing the nuclear norm is not a perfect approximation way. The main argument is that minimizing the nuclear norm is equivalent to minimizing the sum of all the singular values of the affinity matrix. This strategy ignores the fact that different singular values of the affinity/similarity matrix generally correspond to different importance.

Actually, the larger the singular value is, the more energy the corresponding singular vector contains. So concentrating energy into several larger singular values benefits for clustering or classification via reducing the rank of the affinity matrix. Inspired by this motivation, Truncated Nuclear Norm (TNN) \cite{HuZhangYeLiHe2013} and Partial Sum minimization of Singular Values (PSSV) \cite{OhTaiBazinKimKweon2015} both propose maintaining several largest singular values unchanged and minimizing the rest (PSSV norm). Doing so achieves better performance than applying Robust Principal Component Analysis (RPCA) \cite{CandesLiMaWright2011} in image recovery applications.

In this paper, we intend to replace the nuclear norm in LRR by the PSSV norm to construct a new clustering model, called Partial Sum Minimization of Singular Values Representation (PSSVR) model. Compared with LRR, PSSVR is not only able to capture global structures of the data, but also to take into account the prior knowledge of the practical applications.

We also note that LRR or other clustering methods are designed for vectorial data which are generated from linear spaces and  the dissimilarity of data is measured by Euclidean distance.  This has limited the application of LRR in handling with very high dimensional data, due to high computational cost, such as large scale image sets and video data. Additionally, it has been proven that such high dimensional data are always embedded in nonlinear low dimension manifold \cite{WangShanChenGao2008} and it is inappropriate to use the current LRR method to handle them.

Most existing manifold learning methods, which explore the nonlinear manifold structure hidden in high dimension data \cite{RoweisSaul2000,TenenbaumSilvaLangford2000,HeNiyogi2003,ZhangZha2004,HuangNieHuangDing2014,ChangNieYangZhangHuang2016}, are mainly designed for vectorial data usually with high computational complexity and are not suitable to process videos from wild practical sensors. Simply vectoring a video will destroy the spatio-temporal information and generate an ultra-high dimensional vector. Fortunately, Grassmann manifold is widely used to represent videos in recent research, see \cite{TuragaVeeraraghavanSrivastavaChellappa2011,HarandiSandersonShiraziLovell2011,HarandiSandersonShenLovell2013,WangHuGaoSunYin2016CSVT}.
In these strategies, a video clip is represented as a subspace, i.e., a point on Grassmann manifold. One of good properties of Grassmann manifold is that Grassmann manifold can be easily embedded into a linear space --- Symmetric matrix space. Therefore, all abstract Grassmann points are embedded into the symmetric matrix space and the clustering methods can be applied in the embedded space for Grassmann manifolds. Utilizing these advantages of Grassmann manifolds, we represent the high dimensional videos or image sets as Grassmann points for clustering.

\begin{definition}[Clustering on Grassmann Manifold] Given  a set of `points' on a given Grassmann manifold, i.e., a number of subspaces of same dimension in an Euclidean space, the task of clustering on Grassmann manifolds is to cluster all the given `points' (subspaces) into several groups under a similarity measure.
\end{definition}

For example, on 2D Euclidean space (the plane), we are given a set of lines passing through the origin, i.e., `points' (subspaces) on the Grassmann manifold $\mathcal{G}(2,1)$. We cluster them into their respective groups. Fig. \ref{Fig2} briefly shows the behavior of clustering on Grassmann manifold.

\begin{figure*}
    \begin{center}
    \includegraphics[width=0.7\textwidth]{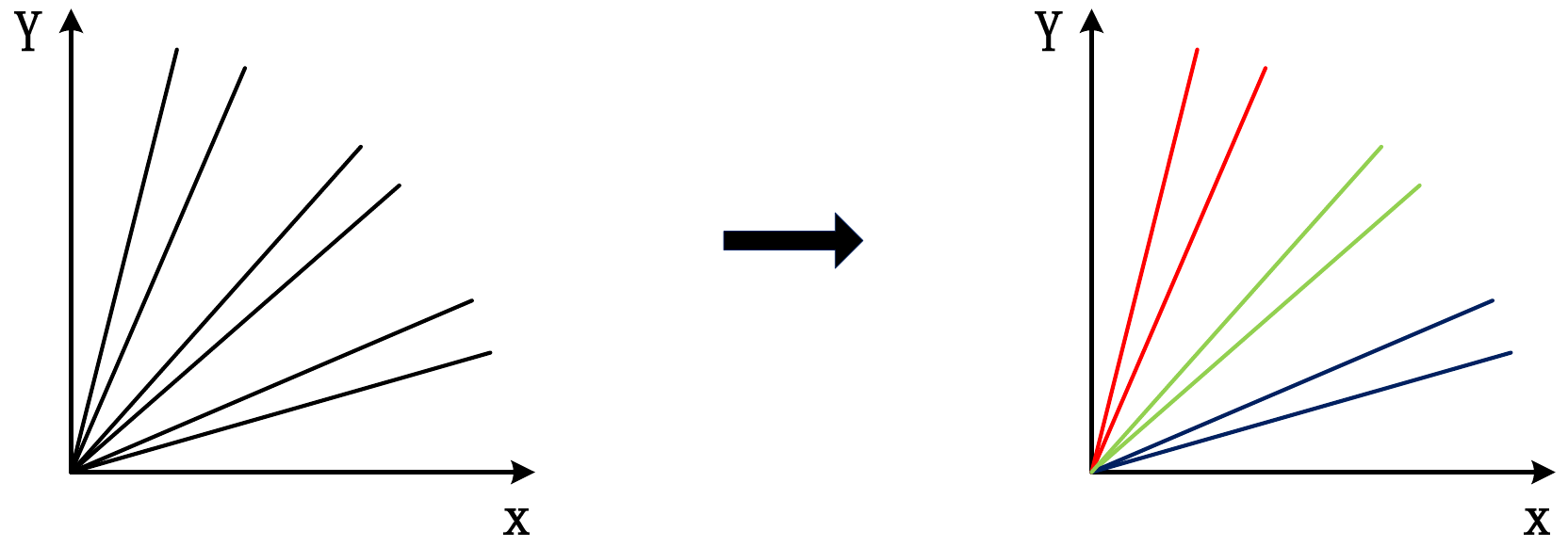}
    \end{center}
    \caption{Given a set of 1-dimensional subspaces in $R^2$, or 'points' on Grassmann manifold $\mathcal G(2,1)$ and cluster these 'points' (subspaces) into their three respective groups.}\label{Fig2}
\end{figure*}

In this paper, we combine Grassmann manifolds with the above PSSVR,  leading to a new clustering method, namely Grassmann manifolds PSSVR model (GPSSVR). The whole clustering procedure is illustrated in Fig. \ref{Fig1}. The videos or image sets are firstly represented as Grassmann points. All Grassmann points are then embedded into the symmetric matrix space as mentioned above, thus we naturally extend the PSSVR model in Euclidean space to the one on Grassmann manifolds. GPSSVR explores the intrinsic nonlinear relation hidden in high dimensional data and implements clustering on the manifold. 

GPSSVR mainly reveals the global structure underlying the data while the local structural information of the data is not well considered. To address this limitation, we further introduce a local structure constraint, based on Laplacian matrix, into our model to model the local feature of the data, resulting in the Laplacian GPSSVR, named as LapGPSSVR.  The contribution of this paper is summarized as follows:

\begin{itemize}
\item Proposing a novel form of LRR model, so-called PSSVR, which not only extracts the low rank structure of data, but also retains important information for clustering.
\item Extending the PSSVR model onto the Grassmann Manifolds based on our previous works in \cite{WangHuGaoSunYin2014,WangHuGaoSunYin2016}; and giving a practical solution to the proposed GPSSVR model; and
\item Introducing a Laplacian matrix based constraint into the GPSSVR model to represent the local geometry of the data. 
\end{itemize}

The rest of the paper is organized as follows. In Section \ref{Sec:RelatedWork}, we review some related works. In Section \ref{Sec:2}, we summarize the geometric properties of Grassmann manifolds and some basic knowledge of LRR and PSSV. In Section \ref{Sec:3}, we propose the PSSVR on Grassmann manifolds and detail the solution to it. In Section \ref{Sec:4}, a Laplacian constraint is introduced into our proposed model to maintain the local structure of data. In Section \ref{Sec:5}, the performance of the proposed methods are evaluated on several public datasets. In the last section, we conclude the paper and elaborate our future work.

\section{Related work}\label{Sec:RelatedWork}

In this section, we review in more details on several classical subspace clustering algorithms for linear subspaces, a number of methods with the improved nuclear norms, and some manifolds in literature.

Clustering is a fundamental problem in computer vision and machine learning. A large number of methods have been proposed to solve this problem, such as the conventional iterative methods \cite{Tseng2000,HoYangLimLeeKriegman2003}, the statistical methods \cite{TippingBishop1999a,GruberWeiss2004}, the factorization-based algebraic approaches \cite{Kanatani2001,HongWrightHuangMa2006,MaYangDerksenFossum2008}, and the spectral clustering methods \cite{ChenLerman2009,ElhamifarVidal2009,LiuLinYu2010,LiuYan2011,FavaroVidalRavichandran2011,LangLiuYuYan2012,LiuLinSunYuMa2013,ZhangZongYouYong2016}. Among all the clustering methods, the Spectral Clustering (SC) algorithm is state-of-the-art with excellent performance in many applications \cite{Luxburg2007,ElhamifarVidal2013} by exploring affinity information of data. In this framework, final clustering is obtained by applying a spectral method such as the Normalized Cuts (NCut) \cite{ShiMalik2000} on the affinity matrix learned from data.  As a result, how to construct an effective affinity matrix becomes the key question. Since our method belongs to this type of spectral clustering algorithms, we review the related works along this direction.

Two representative methods are Sparse Subspace Clustering (SSC) \cite{ElhamifarVidal2013} and Low Rank Representation (LRR) \cite{LiuLinSunYuMa2013}, which are both based on the data self-expressive property. SSC uses the sparsest self representation of data produced by $l_1$-norm to construct the affinity matrix, while LRR relies on the Rank Minimization regularization, inspired by RPCA. Different from SSC, which only independently focuses on the sparsest representation for each datum and ignores the relations among object data, LRR explores the matrix rank to capture the underlying global structure hidden in a data set. It has been proven that, when a data set is actually from a union of several low-dimension subspaces, LRR can reveal this structure to facilitate subspace clustering \cite{LiuLinSunYuMa2013}. In many clustering scenarios, LRR has obtained successful applications, such as face recognition \cite{ZhangJiangDavis2013}, visual tracking \cite{ZhangGhanemLiuAhuja2012} and saliency detection \cite{LangLiuYuYan2012}.

LRR employs the nuclear norm to approximate to the Rank Minimization regularization.
Actually, in many applications, the rank of data (the matrix $A$ in formula (\ref{PSSVModel})) is known, for example, 3 in  photometric stereo application and 1 in background subtraction, but in the current minimization of nuclear norm, this prior information has not been well utilized. 

To address the issues associated with the nuclear norm, researchers propose some non-convex penalty functions which are better approximation to the rank minimization 
and easier to optimize. Gu \emph{et al}. \cite{GuZhangZuoFeng2014} propose the weighted nuclear norm minimization method (WNNM). Jeong and Lee \cite{JeongLee2014} use the Schatten $p$-norm of the singular values to fit the rank minimization of a matrix. Most interestingly, Hu \emph{et al}. \cite{HuZhangYeLiHe2013} and Oh \emph{et al}. \cite{OhTaiBazinKimKweon2015} both propose minimizing only the smallest $N-r$ singular values while keeping the largest $r$ singular values unconstrained, where $N$ is the number of singular values of the matrix and $r$ is the expected rank of the matrix which could be usually estimated by using prior knowledge.

In order to explore the nonlinear manifold structure hidden in high dimensional data and obtain their proper representation \cite{HuangNieHuangDing2014,LuLaiFanCuiZhu2015,ChangNieYangZhangHuang2016}, many manifold learning methods are proposed, such as Locally Linear Embedding (LLE) \cite{RoweisSaul2000}, ISOMAP \cite{TenenbaumSilvaLangford2000}, Locally Linear Projection (LLP) \cite{HeNiyogi2003}, and Local Tangent Space Alignment (LTSA) \cite{ZhangZha2004}. However, these methods are usually designed to handle vectorial data with higher computational complexity, which are not suitable to high dimensional data, such as videos. As each of those videos may contains different number of frames even without correct temporal relation, simply vectorizing them may produce vectors in different dimensions. Given the shortcomings of video vector representation, Grassmann manifolds has become a competitive tool \cite{TuragaVeeraraghavanSrivastavaChellappa2011,HarandiSandersonShenLovell2013}.

\section{Background Theory}\label{Sec:2}

We review some concepts about Grassmann Manifolds, Low Rank Representation and Partial Sum Minimization of Singular Values, which pave the way for introducing the proposed method.

\subsection{Grassmann Manifolds}
Grassmann manifolds $\mathcal{G}(p,d)$ \cite{AbsilMahonySepulchre2008} consists of the set of all linear $p$-dimensional subspaces of $\mathbb{R}^d$ $(0\leq p\leq d)$, which can be represented by the quotient space of all the $d\times p$ matrices with $p$ orthogonal columns under the $p$-order orthogonal group $\mathcal{O}(p)$:
\[
\mathcal{G}(p,d) = \{\mathbf X\in \mathbb{R}^{d\times p} : \mathbf X^T\mathbf X = \mathbf I_p\} / \mathcal{O}(p).
\]
there are two popular methods to define a metric on Grassmann manifold: one is to define consistent metrics in tangent spaces to make Grassmann manifold a Riemann manifold (intrinsic metric) \cite{GohVidal2008,CetingulWrightThompsonVidal2014}; and the other one is to embed Grassmann manifold into the symmetric matrix space where the Euclidean distance (Fibonacci norm) can be applied. The latter one is easier and more effective in practice, and the mapping relation can be represented as \cite{HarandiSandersonShenLovell2013},
\begin{equation}
\label{Grassmann2Sym_mapping}
  \Pi : \mathcal{G}(p,d) \rightarrow \text{Sym}(d), \ \ \ \Pi(\mathbf X)=\mathbf X\mathbf X^T.
\end{equation}
The embedding $\Pi(\mathbf X)$ is diffeomorphism \cite{HelmkeHuper2007}. In this paper, we adopt the second strategy on Grassmann manifolds to define the following distance inherited from the symmetric matrix space under this mapping \cite{HarandiSandersonShenLovell2013},

\begin{equation}
\label{Dist_Grassmann}
 d^2_g(\mathbf X,\mathbf Y) = \frac12\|\Pi(\mathbf X)-\Pi(\mathbf Y)\|^2_F. 
\end{equation}

A point on Grassmann manifolds is actually an equivalent class of all the orthogonal matrices in $\mathbb{R}^{d\times p}$, any one of which can be converted to the other by a $p\times p$ orthogonal matrix. Thus Grassmann manifolds is naturally regarded as a good representation for video clips/image sets, thus can be used to tackle the problem of videos matching.

\subsection{Low Rank Representation}
Given a set of data drawn from an unknown union of subspaces $\mathbf X = [\mathbf x_1, \mathbf x_2, ..., \mathbf x_m]\in\mathbb{R}^{d\times m}$ where $d$ is the data dimension and $m$ is the number of data, the objective of subspace clustering is to assign each data sample to its underlying subspace. The basic assumption is that the data in $\mathbf X$ are drawn from a union of $k$ subspaces $\{\mathcal{S}_i\}^k_{i=1}$ of dimensions $\{d_i\}^k_{i=1}$.

Under the data self representation principle, each data point in the dataset can be written as a linear combination of other data points, i.e., $\mathbf X=\mathbf X\mathbf Z$, where $\mathbf Z\in\mathbb{R}^{m\times m}$ is a matrix of similarity coefficients.

The LRR model is formulated as \cite{LiuLinYu2010}
\begin{equation}
\label{LRRModel}
\min_{\mathbf Z, \mathbf E}\|\mathbf E\|^2_{F} + \lambda \|\mathbf Z\|_*, \ \text{ s.t. } \ \mathbf X = \mathbf X\mathbf Z+\mathbf E.
\end{equation}
In problem (\ref{LRRModel}), $\mathbf E$ is the error resulting from the self representation. $F$-norm can be changed to other norms e.g. $\ell_{2,1}$-norm as done in the original LRR model. When the data set does not contain many outliers, the final clustering accuracies have little differences between using $F$-norm and $\ell_{2,1}$-norm. What is more, problem (\ref{LRRModel}) has a closed-form solution which is faster many times than the one with the $\ell_{2,1}$-norm. LRR takes a holistic view in favor of a coefficient matrix in the lowest rank, measured by the nuclear norm $\|\cdot\|_*$, which uses the sum of all the singular values of the matrix to approximate to the Rank Minimization regularization.

\subsection{Partial Sum Minimization of Singular Values in RPCA (PSSV)}

To recover a low rank matrix $\mathbf A$ from corrupted data $\mathbf X$, RPCA \cite{WrightGaneshRaoPengMa2009} minimizes the  rank of matrix $\mathbf A$ by formulating the following problem,
\[
\min\limits_{\mathbf A,\mathbf E}\|\mathbf A\|_*+\lambda\|\mathbf E\|_0 \ \ \text{s.t.} \ \ \mathbf X=\mathbf A+\mathbf E,
\]
where $\mathbf E\in R^{d\times m}$ is the noise which is assumed to be sparse in RPCA model. If the data is corrupted by Gaussian noise, the $\ell_0$-norm can be replaced by Frobenius norm. However, in many practical problems, where the rank can be estimated, the nuclear norm limits model performance due to over-relaxing the rank minimization constraints and ignoring the individual importance of each singular value of matrix $\mathbf A$. Based on the prior knowledge about the rank of matrix $\mathbf A$, Oh \emph{et al}. \cite{OhTaiBazinKimKweon2015} propose a new model to minimize partial sum of singular values of matrix $\mathbf A$ while maintaining the rest singular values unconstrained, as defined by the following problem,
\begin{equation}
\label{PSSVModel}
\min\limits_{\mathbf A,\mathbf E} \|\mathbf A\|_{>r} + \lambda\|\mathbf E\|_1, \ \ \text{s.t.} \ \ \mathbf X=\mathbf A+\mathbf E,
\end{equation}
where PSSV norm $\|\mathbf A\|_{>r}=\sum\limits_{i=r+1}^{\min(d,m)}\sigma_i (\mathbf A)$ and $\sigma_i (\mathbf A)$ represents the $i$-th singular value of the matrix $\mathbf A$. The $r$ is the expected rank of the matrix $\mathbf A$ which may be derived from the prior knowledge of a defined problem.

\section{Partial Sum Minimization of Singular Values Representation on Grassmann Manifolds (GPSSVR)}\label{Sec:3}
In this section, we will propose an improved Rank Minimization approximation-based subspace clustering method, namely Partial Sum Minimization of Singular Values Representation (PSSVR), and extend it onto Grassmann manifolds. An effective solution to the proposed model on Grassmann manifolds is explored.

\subsection{PSSVR on Grassmann}
For a set of samples $\mathbf X=[\mathbf x_1, \mathbf x_2, ..., \mathbf x_m]\in R^{d\times m}$, we adopt the self-representation method same as LRR to represent the data and replace the nuclear norm of LRR with the partial sum of singular values of coefficient matrix. Thus we construct a 
PSSVR as follows:
\begin{equation}
\label{PSSVR-1}
\min\limits_{\mathbf Z} \|\mathbf Z\|_{>r}+ \lambda\|\mathbf E\|_F^2 \ \ \text{s.t.} \ \ \mathbf X = \mathbf X\mathbf Z + \mathbf E.
\end{equation}
Analogue to the low rank constraint on matrix $\mathbf A$, which represents the hidden clean data of the origin data $\mathbf X$, as the PSSV model shown in (\ref{PSSVModel}), we add low rank constraint on the coefficient matrix $\mathbf Z$ to reveal the low rank structure hidden in the origin data $\mathbf X$. In addition, we use $\|\cdot\|_F^2$ instead of $\|\cdot\|_1$ in (\ref{PSSVModel}) to measure the reconstruct error $\mathbf E$. By eliminating variable $\mathbf E$, we can write out an equivalent problem as follows
\begin{equation}
\label{PSSVR-2}
\min\limits_{\mathbf Z} \|\mathbf Z\|_{>r}+ \lambda\sum\limits_{i=1}^{m}\|\mathbf{x}_i-\sum_{j=1}^m z_{ij}\mathbf{x}_j\|_F^2,
\end{equation}
where the measure $\|\mathbf{x}_i-\sum\limits_{j=1}^m  z_{ij}\mathbf{x}_j\|_F^2$ is the Euclidean distance between the point $\mathbf x_i$ and its linear combination of all the other data points including $\mathbf x_j$ and $\mathbf Z = [z_{ij}]$.

Now, let us consider the generalization of problem (\ref{PSSVR-2}) onto Grassmann manifolds. Given a set of Grassmann points $\mathcal{X}=\{\mathbf X_1, \mathbf X_2,...,\mathbf X_m\}$ where $\mathbf X_i \in \mathcal{G}(p,d)$ and $m$ is the number of samples. Intuitively translating the PSSVR model to the non-flat Grassmann manifolds results in the following formula:
\begin{equation}
\label{LRRM}
\min_{\mathbf Z}\|\mathbf Z\|_{>r}+\lambda\sum^m_{i=1}\bigg\| \mathbf X_i \ominus (\biguplus^m_{j=1} z_{ij}\odot \mathbf X_j)\bigg\|_{\mathcal{G}},
\end{equation}
where $\left\|\mathbf X_i \ominus (\biguplus^m_{j=1} z_{ij}\odot \mathbf X_j)\right\|_{\mathcal{G}} $ with the operator $\ominus$ represents the manifold distance between $\mathbf X_i$ and its reconstruction $\biguplus^m_{j=1}z_{ij}\odot \mathbf X_j$ denoting the ``\emph{combination}'' operation on the manifold. All these operators $\ominus$, $\odot$ and $\biguplus$ are abstract operations, which represent the `linear operations' to be defined on this manifold. So to establish the PSSVR model on Grassmann Manifolds, one should define a proper distance and proper combination operations on the manifold.

From the geometric property of Grassmann manifolds, we can use the metric on Grassmann manifolds induced by the distance defined in (\ref{Dist_Grassmann}) to replace the manifold distance in (\ref{LRRM}), i.e.
\begin{equation}
\label{new8}
\left\| \mathbf X_i \ominus (\biguplus^m_{j=1}z_{ij}\odot \mathbf X_j)\right\|_{\mathcal{G}} =d_g( \mathbf X_i ,\biguplus^m_{j=1} z_{ij}\odot \mathbf X_j).
\end{equation}
Additionally, from the mapping in (\ref{Grassmann2Sym_mapping}), the mapped points are symmetric matrices 
in $\text{Sym}(d)$, so they have the natural linear combination operation like that in Euclidean space. Thus we can replace the Grassmann points with its mapped points to implement the combination in (\ref{LRRM}), i.e.
\begin{equation}
\label{new9}
\biguplus^m_{j=1}z_{ij}\odot \mathbf X_j =\mathcal{X}\times_3 Z,
\end{equation}
where $ \mathcal{X} = \{ \mathbf X_1\mathbf X_1^T, \mathbf X_2\mathbf X_2^T, ..., \mathbf X_m\mathbf X_m^T\}\subset \text{Sym}(d)$ is a $3$-order tensor which stacks all mapped symmetric matrices along the $3$rd mode, and $\times_3$ is the mode-3 multiplication between a 3-order tensor and a matrix \cite{KoldaBader2009}. Up to now, we can construct the PSSVR model on Grassmann Manifolds as follows,
\begin{equation}
\label{GPLRR}
\min\limits_{\mathcal{E},\mathbf Z} \|\mathbf Z\|_{>r} + \lambda\|\mathcal{E}\|_F^2 \ \ \  \text{s.t.} \ \ \ \mathcal{X}=\mathcal{X}\times_3 \mathbf Z+\mathcal{E},
\end{equation}
where the  reconstructed error $\mathcal E$ is also a $3$-order tensor and the coefficient matrix $\mathbf Z\in \mathbb{R}^{m\times m}$. We call this model the GPSSVR.

\subsection{Algorithm for PSSVR on Grassmann Manifolds}

To solve the GPSSVR problem in (\ref{GPLRR}), we first simplify the representation of the reconstruction tensorial error $\mathcal E$ to avoid the complex calculation between 3-order tensor and a matrix in (\ref{GPLRR}).

Consider the $i$-th front slice $\mathbf E_i$ of the tensor $\mathcal E$, i.e.,
\[
\mathbf E_i = \mathbf X_i\mathbf X_i^T-\sum\limits_{j=1}^{m}z_{ij}(\mathbf X_j\mathbf X_j^T).
\]

Denote
\[
\Delta_{ij} = \text{tr}[(\mathbf X_j^{T}\mathbf X_i)(\mathbf X_i^{T}\mathbf X_j)].
\]

Clearly we have $\Delta_{ij}=\Delta_{ji}$,  hence we can define an $m \times m$ symmetric matrix
\[
\Delta = (\Delta_{ij})_{i=1,j=1}^{m}.
\]

Now it is straightforward to represent the reconstruction error $\|\mathcal E\|_F^2$ as
\begin{equation}
\label{GPLRR_EReconstruct}
\|\mathcal{E}\|_F^2 = \text{tr}(\Delta) -2\text{tr}(\mathbf Z\Delta)+\text{tr}(\mathbf Z\Delta \mathbf Z^T).
\end{equation}

Therefore, substituting (\ref{GPLRR_EReconstruct}) into the objective function in (\ref{GPLRR}) results in an equivalent and solvable optimization model,
\begin{equation}
\label{GPLRR-2}
\min\limits_{\mathbf Z} - 2\lambda\text{tr}(\mathbf Z\Delta) + \lambda\text{tr}(\mathbf Z\Delta \mathbf Z^T) + \|\mathbf Z\|_{>r}.
\end{equation}

To tackle this problem, we use the alternating direction method (ADM) \cite{LinLiuSu2011,BoydParikhChuPeleatoEckstein2011} which is widely used to solve unconstrained convex problems \cite{LiuLinSunYuMa2013,LiuYan2012}. Firstly, we introduce an auxiliary  variable $\mathbf J=\mathbf Z \in \mathbb{R}^{m\times m}$ to separate the terms of variable $\mathbf Z$ and reformulate the optimization problem as follows,
\begin{equation}
\min\limits_{\mathbf Z,\mathbf J} - 2\lambda\text{tr}(\mathbf Z\Delta) + \lambda\text{tr}(\mathbf Z\Delta \mathbf Z^T) + \|\mathbf J\|_{>r} \ \ \text{s.t.} \ \ \mathbf J=\mathbf Z.
\end{equation}

Thus, the ADM method can be applied to absorb the linear constraint into the objective function as follows,
\begin{equation}
\begin{aligned}\label{GPLRR_ALM}
f(\mathbf Z,\mathbf J,\mathbf Y,\mu) &= - 2\lambda\text{tr}(\mathbf Z\Delta) + \lambda\text{tr}(\mathbf Z\Delta \mathbf Z^T)+\|\mathbf J\|_{>r} \\
             &+ \langle \mathbf Y,\mathbf Z-\mathbf J \rangle
             + \frac{\mu}{2}\|\mathbf Z-\mathbf J\|_F^2,
\end{aligned}
\end{equation}
where matrix $\mathbf Y$ is the Lagrangian Multiplier and $\mu$ is a weight to tune the error term $\|\mathbf Z-\mathbf J\|_F^2$.

The ALM formula (\ref{GPLRR_ALM}) can be naturally solved by  alternatively solving for $\mathbf Z$, $\mathbf J$ and $\mathbf Y$, respectively in an iterative procedure. The pseudo code of our proposed method is summarized in ALGORITHM 1. Now, we will analyze how to update these variables in each iteration.

\begin{algorithm}
\renewcommand{\algorithmicrequire}{\textbf{Input:}}
\renewcommand\algorithmicensure {\textbf{Output:} }
\caption{Solving the problem (\ref{GPLRR_ALM}) by ADM.}\label{wholeAlg}
\begin{algorithmic}[1]
\REQUIRE{The Grassmann sample set $\{\mathbf X_i\}_{i=1}^m$, $\mathbf X_i\in \mathcal{G}(p,d)$, the expected rank $r$, and the balancing parameters $\lambda$.}
\ENSURE{The GPSSVR representation $\mathbf Z$.}
\STATE Initialize:$\mathbf J=\mathbf Z=0,\mathbf Y=0,\mu=10^{-6},\mu_{max}=10^{10}, \rho=1.9$ and $\varepsilon=10^{-8}$\;
\FOR{i=1:m}
\FOR{j=1:m}
\STATE $\Delta_{ij}\leftarrow \mbox{tr}[(\mathbf X_j^{T}\mathbf X_i)(\mathbf X_i^{T}\mathbf X_j)]$\;
\ENDFOR
\ENDFOR
\WHILE{not converged}
\STATE fix $\mathbf Z$ and update $\mathbf J$ by \\$\mathbf J\leftarrow \min\limits_{\mathbf J}(\|\mathbf J\|_{>r}+\langle \mathbf Y,\mathbf Z-\mathbf J\rangle+\frac{\mu}{2}\|\mathbf Z-\mathbf J\|_F^2)$ \;
\STATE fix $\mathbf J$ and update $\mathbf Z$ by \\$\mathbf Z = (2\lambda\Delta + \mu \mathbf J - \mathbf Y)(2\lambda \Delta + \mu \mathbf I)^{-1}$ \;
\STATE update the multipliers: \\ $\mathbf Y\leftarrow \mathbf Y+\mu(\mathbf Z-\mathbf J)$ \;
\STATE update the parameter $\mu$ by $\mu \leftarrow \min(\rho\mu,\mu_{\mbox{max}})$ \;
\STATE check the convergence condition: \\ {$\|\mathbf Z-\mathbf J\|_\infty <\varepsilon$} \;
\ENDWHILE
\end{algorithmic}
\end{algorithm}

\subsubsection{Updating $\mathbf J$}
To update $\mathbf J$ at the $(k+1)-$th iteration, we fix $\mathbf Z$, $\mathbf Y$ and $\mu$ to their $k$-th iteration values, respectively, and solve the following problem accordingly:

\begin{equation}
\begin{aligned}\label{PGLRR_subproblemJ}
\mathbf J^{k+1} &  = \arg\min\limits_{\mathbf J}f(\mathbf Z^k,\mathbf J,\mathbf Y^k,\mu^k)  \\
&         = \arg\min\limits_{\mathbf J}\|\mathbf J\|_{>r} + \langle \mathbf Y,\mathbf Z-\mathbf J \rangle + \frac{\mu}{2}\|\mathbf Z-\mathbf J\|_F^2\\
&         = \arg\min\limits_{\mathbf J}(\|\mathbf J\|_{>r}+\frac{\mu}{2}\|\mathbf J-(\mathbf Z+\frac{\mathbf Y}{\mu})\|_F^2)
\end{aligned}
\end{equation}

For the above problem (\ref{PGLRR_subproblemJ}), a closed-form solution is suggested in \cite{OhTaiBazinKimKweon2015} as the following theorem.

\begin{thm}Given that $\mathbf U \mathbf D \mathbf V^T = \text{SVD}(\mathbf Z+\frac{\mathbf Y}{\mu})$ as defined above, the solution to (\ref{PGLRR_subproblemJ}) is given by
\[
\mathbf J^* = \mathbf U (\mathbf D_r + \mathcal{S}_{\mu^{-1}}\mathbf D_{r^{'}}) \mathbf V^T,
\]
where $\mathbf D_r$ and $\mathbf D_{r^{'}}$ are diagonal matrices. $\text{diag}(\mathbf D_r)$ is the $r$ largest singular values and $\text{diag}(\mathbf D_{r^{'}})$ collects all the rest singular values from SVD. The singular value thresholding operator is defined as $\mathcal{S}_\tau[x] = \text{sign}(x)\cdot \max(|x|-\tau,0)$
\end{thm}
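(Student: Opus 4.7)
The plan is to mimic the derivation of the standard singular value thresholding operator, with extra care because $\|\cdot\|_{>r}$ is not a Schatten norm but only a symmetric function of the trailing singular values. Set $\mathbf{W}=\mathbf{Z}+\mathbf{Y}/\mu$ and expand
\[
\tfrac{\mu}{2}\|\mathbf{J}-\mathbf{W}\|_F^2 = \tfrac{\mu}{2}\|\mathbf{J}\|_F^2 - \mu\langle \mathbf{J},\mathbf{W}\rangle + \tfrac{\mu}{2}\|\mathbf{W}\|_F^2,
\]
and notice that $\|\mathbf{J}\|_F^2=\sum_i\sigma_i(\mathbf{J})^2$ and $\|\mathbf{J}\|_{>r}=\sum_{i>r}\sigma_i(\mathbf{J})$ depend only on the singular values of $\mathbf{J}$, while the last term is a constant with respect to $\mathbf{J}$.

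Next I would invoke von Neumann's trace inequality, $\langle \mathbf{J},\mathbf{W}\rangle\le \sum_i \sigma_i(\mathbf{J})\sigma_i(\mathbf{W})$, with equality attained precisely when $\mathbf{J}$ and $\mathbf{W}$ admit a simultaneous SVD with matched ordering of singular values. Since the part of the objective that depends on the singular vectors of $\mathbf{J}$ is $-\mu\langle \mathbf{J},\mathbf{W}\rangle$ and the other terms depend only on $\sigma_i(\mathbf{J})$, the minimizer must take the form $\mathbf{J}^*=\mathbf{U}\,\mathrm{diag}(\boldsymbol{\sigma}^*)\mathbf{V}^T$ with $\boldsymbol{\sigma}^*$ nonnegative and non-increasing, where $\mathbf{U}\mathbf{D}\mathbf{V}^T=\mathrm{SVD}(\mathbf{W})$. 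The problem thus reduces to the separable scalar program
\[
\min_{\sigma_1\ge\cdots\ge\sigma_n\ge 0}\; \sum_{i=r+1}^{n}\sigma_i + \frac{\mu}{2}\sum_{i=1}^{n}(\sigma_i-d_i)^2,\qquad d_i=\sigma_i(\mathbf{W}).
\]

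For $i\le r$, the linear penalty is absent, so the unconstrained minimizer is $\sigma_i^*=d_i$. For $i>r$, elementary calculus on $\sigma_i+\tfrac{\mu}{2}(\sigma_i-d_i)^2$ over $\sigma_i\ge 0$ yields $\sigma_i^*=\max(d_i-\mu^{-1},0)=\mathcal{S}_{\mu^{-1}}(d_i)$. I would then verify that the monotonicity constraint holds automatically: because $d_r\ge d_{r+1}\ge \mathcal{S}_{\mu^{-1}}(d_{r+1})$ and the shrinkage operator is monotone on $[0,\infty)$, the sorted ordering of $\boldsymbol{\sigma}^*$ is preserved, so removing the constraint does not change the optimum. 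Reassembling the diagonal and using the definitions of $\mathbf{D}_r$ and $\mathbf{D}_{r'}$ gives $\mathbf{J}^*=\mathbf{U}(\mathbf{D}_r+\mathcal{S}_{\mu^{-1}}\mathbf{D}_{r'})\mathbf{V}^T$.

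The main obstacle is justifying the reduction to a scalar problem: since $\|\cdot\|_{>r}$ is not a norm (it is not convex in general), one cannot directly cite known singular-value-thresholding results for unitarily invariant norms. The crucial observation that saves the argument is that $\|\cdot\|_{>r}$ is still a symmetric, non-decreasing function of the singular values, so von Neumann's trace inequality is the only nontrivial ingredient needed to align the SVD of $\mathbf{J}^*$ with that of $\mathbf{W}$; once this is established, the remaining scalar optimization and the ordering check are routine.
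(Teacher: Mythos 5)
Your argument is correct, and it is genuinely different from what the paper does: the paper offers no proof at all for this theorem, deferring entirely to Lemma~1 of Oh \emph{et al}.\ \cite{OhTaiBazinKimKweon2015}, whose argument splits the target matrix into its best rank-$r$ part and a residual and then combines the Eckart--Young theorem with the classical singular value thresholding result. You instead give a self-contained derivation: von Neumann's trace inequality aligns the singular vectors of the minimizer with those of $\mathbf W=\mathbf Z+\mathbf Y/\mu$ (the only step that uses anything beyond calculus, and the one that legitimately bypasses the fact that $\|\cdot\|_{>r}$ is not a unitarily invariant \emph{norm}), after which the problem collapses to a separable scalar program whose per-coordinate minimizers ($\sigma_i^*=d_i$ for $i\le r$, $\sigma_i^*=\max(d_i-\mu^{-1},0)$ for $i>r$) automatically respect the descending order, so the sorting constraint is inactive. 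One small point worth making explicit: to prove the stated formula you do not even need the equality \emph{characterization} in von Neumann's inequality (which is delicate when singular values repeat); the inequality direction alone shows that for any feasible $\mathbf J$ the aligned matrix with the same singular values does at least as well, and your scalar optimum then dominates all aligned candidates, so $\mathbf J^*$ is a global minimizer. Your route buys a proof that is elementary and transparent about where the non-convexity of $\|\cdot\|_{>r}$ is absorbed; the cited route buys brevity by reusing two classical theorems.
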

\begin{proof}
Please refer to the proof of Lemma 1 in \cite{OhTaiBazinKimKweon2015}.
\end{proof}

\subsubsection{Updating $\mathbf Z$}
To update $\mathbf Z$ at the $(k+1)-$th iteration, we derive the ALM formulation (\ref{GPLRR_ALM}) with fixed $\mathbf J$, $\mathbf Y$ and $\mu$ and obtain the following form:
\begin{equation}
\begin{aligned}\label{PGLRR_subproblemZ}
\mathbf Z^{k+1} &= \arg\min\limits_{\mathbf Z}f(\mathbf Z,\mathbf J^{k+1},\mathbf Y^k,\mu^k)\\
        &= \arg\min\limits_{\mathbf Z}- 2\lambda\text{tr}(\mathbf Z\Delta) + \lambda\text{tr}(\mathbf Z\Delta \mathbf Z^T) + \langle \mathbf Y^k,\mathbf Z-\mathbf J^{k+1} \rangle \\
        &+ \frac{\mu^k}{2}\|\mathbf Z-\mathbf J^{k+1}\|_F^2.
\end{aligned}
\end{equation}

This is a quadratic optimization problem about $\mathbf Z$. The closed-form solution is given by
\begin{equation}
\mathbf Z^{k+1} = (2\lambda\Delta + \mu^k \mathbf J^{k+1} - \mathbf Y^k)(2\lambda\Delta + \mu^k \mathbf I)^{-1}.
\end{equation}

\subsubsection{Updating $\mathbf Y$}
Matrix $\mathbf Y$ is the Lagrangian Multiplier for the linear constraint. Once we have solved the two subproblems about $\mathbf J$ and $\mathbf Z$ respectively in $(k+1)$-th iteration, we can update $\mathbf Y$ easily by the following rule:
\begin{equation}\label{updatingY}
\mathbf Y^{k+1} = \mathbf Y^{k}+\mu^{k}(\mathbf Z^{k+1}-\mathbf J^{k+1}).
\end{equation}

\subsubsection{Adapting Penalty Parameter $\mu$}
For the penalty parameter $\mu > 0$, we could update it by:
\[
\mu^{k+1} = \min(\rho\mu^k,\mu_{\max}),
\]
where $\mu_{\max}$ is the pre-defined upper bound of ${\mu^k}$.

\subsubsection{Termination and Clustering}
After obtaining the GPSSVR representation $\mathbf Z^*$ by ALGORITHM 1, an affinity matrix can be constructed $\mathbf W=\frac{|\mathbf Z^*|+|\mathbf Z^{*}|^T}{2}$. 
Then, this affinity matrix can be used in a spectral clustering algorithm to get the final clustering. As a widely used spectral clustering algorithm for subspace segmentation problems, NCut is chosen in this paper. The whole clustering procedure of the proposed method is summarized in ALGORITHM 2.

\begin{algorithm}
\renewcommand{\algorithmicrequire}{\textbf{Input:}}
\renewcommand\algorithmicensure {\textbf{Output:} }
\caption{Clustering algorithm by the PSSVR on Grassmann Manifolds.}\label{wholeAlg}
\begin{algorithmic}[1]
\REQUIRE{The videos for clustering $\mathcal{X}$.}
\ENSURE{The clustering results of $\mathcal{X}$.}
\STATE Representing $\mathcal{X}$ as a set of Grassmann points;
\STATE Mapping Grassmann points into symmetric space as (\ref{Grassmann2Sym_mapping});
\STATE Obtaining the GPSSVR representation $\mathbf Z^*$ of $\mathcal{X}$ by ALGORITHM 1;
\STATE Computing the affinity matrix $\mathbf W=\frac{|\mathbf Z^*|+|\mathbf Z^{*}|^T}{2}$;
\STATE Implementing NCut($\mathbf W$) to get the final clustering result of $\mathcal{X}$.
\end{algorithmic}
\end{algorithm}

\subsection{Computational Complexity}

The computational complexity of ALGORITHM 1 could be divided into two parts: the data representation and the solution to the problem.

In the data representation part, $\Delta$ is calculated by using the trace operation, therefore, for the $m$ samples, the computational complexity of calculating $\Delta$ should be $\mathcal{O}(m^2)$. In the second part of the algorithm, the major computational complexity is the SVD decomposition of an $m\times m$ matrix for updating $\mathbf J$, costing $\mathcal{O}(m^3)$ computational time. However there is no need to calculate all the singular values due to the thresholding operation, instead calculating up to for example $4r$ first singular values by using the partial SVD like \cite{LiuLinYu2010}.
Thus, for the $s$ iterations, the total cost of calculating the solution to the model is $\mathcal{O}(srm^2)$.
The overall computational complexity is $\mathcal{O}(m^2)+\mathcal{O}(srm^2)$.

\section{Laplacian PSSVR on Grassmann Manifolds (LapGPSSVR)}\label{Sec:4}

\subsection{Laplacian PSSVR on Grassmann Manifolds}
For the self-representation based methods, the column of $\mathbf Z$, denoted by $\textbf z_i$, can be regarded as a new representation of data $\textbf x_i$, and $z_{ij}$ represents the similarity between data $\textbf x_i$ and $\textbf x_j$, accordingly. In the proposed method GPSSVR (\ref{GPLRR}), the global structure is enforced by the global constraint of rank minimization of the matrix $\mathbf Z$. To incorporate more local similarity information into $\mathbf Z$ in our model, we consider imposing the local geometrical structures. For this purpose, Laplacian matrix regularization is naturally regarded as a proper choice because it can maintain similarity between data. Thus a Laplacian Partial Sum Minimization of Singular Values Representation on Grassmann Manifolds model, termed LapGRSSVR, can be formulated as
\begin{equation}
\label{LapGPlrr-1}
\min\limits_{\mathbf Z,\mathcal E}\|\mathbf Z\|_{>r} + \lambda\|\mathcal E\|_F^2 + \beta\sum\limits_{i,j}\| \textbf z_i- \textbf z_j\|_2^2 w_{ij}, \ \ \text{s.t.} \ \ \mathcal{X} = \mathcal{X}{\times_3}\mathbf Z + \mathcal{E}.
\end{equation}
where $w_{ij}$ denotes the local similarity between Grassmann points $\mathbf X_i$ and $\mathbf X_j$. There are many ways to define $w_{ij}$'s. In this paper, we simply use the explicit neighborhood determined by its manifold distance measure to define all the $w_{ij}$. Let $C$ be a parameter of neighborhood size, and we define $w_{ij} = d_g(\mathbf X_i,\mathbf X_j)$ if $\mathbf X_i \in \mathcal{N}_C(\mathbf X_j)$; otherwise $w_{ij}=0$, where $\mathcal{N}_C(\mathbf X_j)$ denotes the $C$ nearest elements of $\mathbf X_j$ on Grassmann manifolds.

By introducing the Laplacian matrix $\mathbf L$, problem (\ref{LapGPlrr-1}) can be easily re-written as its Laplacian form,

\begin{equation}
\label{LapGPSSVR-2}
\min\limits_{\mathcal E,\mathbf Z}\|\mathbf Z\|_{>r} + \lambda\|\mathcal E\|_F^2 + 2\beta \text{tr}(\mathbf Z\mathbf L\mathbf Z^T) \ \ \text{s.t.} \ \ \mathcal{X} = \mathcal{X}_{\times_3}\mathbf Z + \mathcal E,
\end{equation}
where the Laplacian matrix $\mathbf L \in R^{m\times m}$ is defined as $\mathbf L=\mathbf D-\mathbf W$, and $\mathbf W=[w_{ij}]_{i=1,j=1}^{m}$ and $\mathbf D = \text{diag}(d_{ii})$ with $d_{ii}=\sum\limits_j w_{ij}$.

\subsection{Algorithm for Laplacian PSSVR on Grassmann Manifolds}
Similar to deriving the algorithm for GPSSVR, problem (\ref{LapGPSSVR-2}) can be easily converted to the following model:
\begin{equation}
\label{LapGPSSVR-3}
\min\limits_{\mathbf Z} - 2\lambda\text{tr}(\mathbf Z\Delta) + \lambda\text{tr}(\mathbf Z\Delta \mathbf Z^T) + 2\beta\text{tr}(\mathbf Z\mathbf L\mathbf Z^T) + \|\mathbf Z\|_{>r}.
\end{equation}

Thus the ADM \cite{LinLiuSu2011} can also be employed to solve this problem. Letting $\mathbf J=\mathbf Z$ to separate the variable $\mathbf Z$ from the terms in the objective function, we can formulate the following problem for (\ref{LapGPSSVR-3}),

\begin{equation}\label{LapPSSVR_ALM}
\min\limits_{\mathbf Z,\mathbf J} - 2\lambda\text{tr}(\mathbf Z\Delta) + \lambda\text{tr}(\mathbf Z\Delta \mathbf Z^T) + 2\beta\text{tr}(\mathbf Z\mathbf L\mathbf Z^T) + \|\mathbf J\|_{>r}, \ \text{s.t.} \ \mathbf J=\mathbf Z.
\end{equation}

So, its ALM formulation can be defined as the following unconstrained optimization,
\begin{equation}
\begin{aligned}\label{Lap}
f(\mathbf Z,\mathbf J,\mathbf Y,\mu) &= - 2\lambda\text{tr}(\mathbf Z\Delta) + \lambda\text{tr}(\mathbf Z\Delta \mathbf Z^T) + 2\beta\text{tr}(\mathbf Z\mathbf L\mathbf Z^T) \\
&+ \|\mathbf J\|_{>r} + \langle \mathbf Y,\mathbf Z-\mathbf J \rangle + \frac{\mu}{2}\|\mathbf Z-\mathbf J\|_F^2.
\end{aligned}
\end{equation}

This problem can be solved by solving the two subproblems (\ref{LapPSSVR_subproblemJ}) and (\ref{LapPSSVR_subproblemZ}) below,
\begin{equation}
\begin{aligned}\label{LapPSSVR_subproblemJ}
\mathbf J^{k+1}  &= \arg\min\limits_{J}f(\mathbf Z^k,\mathbf J,\mathbf Y^k,\mu^k) \\
         &= \arg\min\limits_{\mathbf J}\|\mathbf J\|_{>r} + \langle \mathbf Y^k,\mathbf Z^k-\mathbf J \rangle + \frac{\mu^k}{2}\|\mathbf Z^k-\mathbf J\|_F^2, \\
         &= \arg\min\limits_{\mathbf J}(\|\mathbf J\|_{>r}+\frac{\mu^k}{2}\|\mathbf J-(\mathbf Z^k+\frac{\mathbf Y^k}{\mu^k})\|_F^2)
\end{aligned}
\end{equation}
and
\begin{equation}
\begin{aligned}
\label{LapPSSVR_subproblemZ}
\mathbf Z^{k+1} &= \arg\min_{\mathbf Z}f(\mathbf Z,\mathbf J^{k+1},\mathbf Y^k,u^k) \\
        &= \arg\min\limits_{\mathbf Z}- 2\lambda\text{tr}(\mathbf Z\Delta) + \lambda\text{tr}(\mathbf Z\Delta \mathbf Z^T) + 2\beta \text{tr}(\mathbf Z\mathbf L\mathbf Z^T). \\
        &+ \langle \mathbf Y^k,\mathbf Z-\mathbf J^{k+1} \rangle + \frac{\mu^k}{2}\|\mathbf Z-\mathbf J^{k+1}\|_F^2.
\end{aligned}
\end{equation}

Both (\ref{LapPSSVR_subproblemJ}) and (\ref{LapPSSVR_subproblemZ}) can be solved similar to (\ref{PGLRR_subproblemJ}) and (\ref{PGLRR_subproblemZ}), respectively. For example, the solution to (\ref{LapPSSVR_subproblemZ}) is given by
\begin{equation}
\mathbf Z^{k+1} = (2\lambda\Delta + \mu^k \mathbf J^{k+1} - \mathbf Y^k)(2\lambda\Delta + 2\beta \mathbf L + \mu^k \mathbf I)^{-1}.
\end{equation}

\subsection{Convergence Analysis}
Lin \emph{et al}. \cite{LinLiuSu2011} proposed a linearized ADMM (LADMM) method, in which the linearization is performed over the augmented quadratic penalty term from the linear constrain conditions and the algorithm convergence is theoretically guaranteed. However the algorithm convergence analysis in \cite{LinLiuSu2011} cannot be applied to ALGORITHM 1 in Section \ref{Sec:3} as well as the algorithm for LapGPSSVR in Section \ref{Sec:4} due to the non-convexity of the PSSV norm in the objective functions. However the authors of \cite{OhTaiBazinKimKweon2015} propose a convergence analysis for the PSSV method based on the Lipschitz property of the PSSV norm, see the supplementary material of \cite{OhTaiBazinKimKweon2015}.

The objective functions (\ref{GPLRR_ALM}) and (\ref{Lap}) are different from the objective function in  \cite{OhTaiBazinKimKweon2015} because of those trace terms in second order of $\mathbf Z$. Fortunately splitting variable by setting $\mathbf Z=\mathbf J$ results in an algorithm in which there is no linearization procedure for $\mathbf J$ while the subproblem for $\mathbf Z$ has a closed-form solution. Hence all the analysis in \cite{OhTaiBazinKimKweon2015} is valid for our algorithms in this paper. The convergence proof can be formed in the same way as that in \cite{OhTaiBazinKimKweon2015}. For example, we can work out the KKT conditions for both (\ref{GPLRR_ALM}) and (\ref{Lap}), based on the generalized sub-gradient of the PSSV norm. And finally we can have

\begin{thm}\label{Convergencethm}[Convergence] Let $S^k=(\mathbf Z^k,\mathbf J^k,\mathbf Y^k, \hat{\mathbf Y}^k)$ where $\hat{\mathbf Y}^{k+1}=\mathbf Y^k+\mu^k(\mathbf Z^k-\mathbf J^{k+1})$. If $\{\mathbf Y^k\}_{k=1}^{\infty}$ and $\{\hat{\mathbf Y}^{k}\}_{k=1}^{\infty}$ are bounded, $\lim\limits_{k\rightarrow\infty}(\mathbf Y^{k+1}-\mathbf Y^k)=0$, and $\mu^k$ is non-decreasing, then any accumulation point of $\{S^k\}_{k=1}^{\infty}$ satisfies the KKT condition. In particular, whenever $\{S^k\}_{k=1}^{\infty}$ converges, it converges to a KKT point of problem (\ref{GPLRR_ALM}) (or (\ref{Lap})).
\end{thm}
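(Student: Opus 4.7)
The plan is to mirror the convergence argument in the supplementary of \cite{OhTaiBazinKimKweon2015}, exploiting the structural feature that after the splitting $\mathbf J=\mathbf Z$ the non-convex PSSV penalty acts only on $\mathbf J$, while the quadratic trace terms in $\mathbf Z$ (including the Laplacian contribution $2\beta\,\text{tr}(\mathbf Z\mathbf L\mathbf Z^T)$ in (\ref{Lap})) are smooth and treatable by ordinary gradients. First I would write down the KKT conditions of the constrained problem whose augmented Lagrangian is (\ref{GPLRR_ALM}): a stationary triple $(\mathbf Z^*,\mathbf J^*,\mathbf Y^*)$ must satisfy primal feasibility $\mathbf Z^*=\mathbf J^*$, the smooth stationarity $-2\lambda\Delta+2\lambda\mathbf Z^*\Delta+\mathbf Y^*=\mathbf 0$ (with $4\beta\mathbf Z^*\mathbf L$ added in the Laplacian case), and the generalized inclusion $\mathbf Y^*\in\partial\|\mathbf J^*\|_{>r}$, where $\partial\|\cdot\|_{>r}$ is the sub-differential of the PSSV norm characterized in the supplementary of \cite{OhTaiBazinKimKweon2015}.

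Next I would read off the per-iteration optimality conditions from the closed-form updates of ALGORITHM 1. The $\mathbf J$-subproblem (\ref{PGLRR_subproblemJ}) is a partial singular-value thresholding whose generalized first-order condition is exactly $\hat{\mathbf Y}^{k+1}\in\partial\|\mathbf J^{k+1}\|_{>r}$ with $\hat{\mathbf Y}^{k+1}=\mathbf Y^k+\mu^k(\mathbf Z^k-\mathbf J^{k+1})$, as in the statement of the theorem. The $\mathbf Z$-subproblem (\ref{PGLRR_subproblemZ}) is a smooth strongly-convex quadratic whose closed form is equivalent to $-2\lambda\Delta+2\lambda\mathbf Z^{k+1}\Delta+\mathbf Y^{k+1}=\mathbf 0$ (with $4\beta\mathbf Z^{k+1}\mathbf L$ added for (\ref{Lap})), where $\mathbf Y^{k+1}=\mathbf Y^k+\mu^k(\mathbf Z^{k+1}-\mathbf J^{k+1})$ is the multiplier update from (\ref{updatingY}). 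Thus the iterates already satisfy the KKT system up to the residual $\mathbf Z^{k+1}-\mathbf J^{k+1}$ and the gap $\hat{\mathbf Y}^{k+1}-\mathbf Y^{k+1}=\mu^k(\mathbf Z^k-\mathbf Z^{k+1})$ between the two dual sequences.

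The hypotheses then drive these residuals to zero and enable a limit passage. Since $\mu^k$ is non-decreasing and $\mu^0>0$, the identity $\mathbf Y^{k+1}-\mathbf Y^k=\mu^k(\mathbf Z^{k+1}-\mathbf J^{k+1})$ together with $\mathbf Y^{k+1}-\mathbf Y^k\to\mathbf 0$ forces $\mathbf Z^{k+1}-\mathbf J^{k+1}\to\mathbf 0$, giving primal feasibility $\mathbf Z^*=\mathbf J^*$ at any accumulation point. Boundedness of $\{\mathbf Y^k\}$ and $\{\hat{\mathbf Y}^k\}$ propagates through the explicit updates to boundedness of $\{\mathbf Z^k\}$ and $\{\mathbf J^k\}$, so accumulation points exist. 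Along any convergent subsequence of $S^k$ the smooth stationarity passes to the limit by continuity of the affine map $\mathbf Z\mapsto -2\lambda\Delta+2\lambda\mathbf Z\Delta$, and the sub-differential inclusion $\hat{\mathbf Y}^k\in\partial\|\mathbf J^k\|_{>r}$ passes by the outer semi-continuity (graph-closedness) of $\partial\|\cdot\|_{>r}$ established in the supplementary of \cite{OhTaiBazinKimKweon2015}. The Laplacian version (\ref{Lap}) needs no new ingredient, since $\mathbf L$ is a fixed positive semidefinite matrix and the extra quadratic term merely augments the $\mathbf Z$-stationarity. The whole-sequence clause is then routine: a convergent sequence has a unique accumulation point, which by the above is a KKT point.

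The main obstacle is exactly the graph-closedness of the generalized sub-differential of the PSSV norm, required to pass $\hat{\mathbf Y}^k\in\partial\|\mathbf J^k\|_{>r}$ to the limit; ordinary convex analysis does not apply because $\|\cdot\|_{>r}$ is non-convex, and one must rely on the Lipschitz/SVD-based characterization developed in \cite{OhTaiBazinKimKweon2015}. Once this ingredient is imported, the additional smooth trace terms present in (\ref{GPLRR_ALM}) and (\ref{Lap}) are absorbed cleanly into the $\mathbf Z$-stationarity, and no further novel optimization-theoretic work is needed.
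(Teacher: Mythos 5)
Your proposal follows essentially the same route as the paper's appendix: state the KKT system (primal feasibility, smooth stationarity in $\mathbf Z$, and the generalized sub-differential inclusion for the PSSV norm), read the per-iteration optimality conditions off the two subproblems and the multiplier update (\ref{updatingY}), use the boundedness and $\mathbf Y^{k+1}-\mathbf Y^k\to 0$ hypotheses to drive the residuals $\mathbf Z^{k+1}-\mathbf J^{k+1}$ and $\mu^k(\mathbf Z^k-\mathbf Z^{k+1})$ to zero, and import the PSSV sub-differential machinery from \cite{OhTaiBazinKimKweon2015} for the limit passage. If anything you are slightly more careful than the paper, which drops the $4\beta\mathbf Z\mathbf L$ term from the $\mathbf Z$-stationarity condition in its appendix and does not explicitly name the graph-closedness of $\partial\|\cdot\|_{>r}$ that the limit argument relies on.
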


For better flow of the paper, we move the proof of Theorem \ref{Convergencethm} to Appendix. Theorem \ref{Convergencethm} guarantees a converged solution if the sequence produced by the Algorithms is convergent. In our proposed algorithms, each subproblem has a closed-form solution and the value tends to be stable along with increasing iteration. In addition, the experimental results (see Fig. \ref{covergencefig}) also demonstrate that our ADM-based algorithms have a strong convergence property. It is still a challenging task, to the best of our knowledge, to prove the general convergence property of ADM in the cases of existence of non-convex components in objective functions. The ADM for non-convex problems can be considered as a local optimization method, which aims to converge to a point with better objective value.

\begin{figure*}
    \begin{center}
    \includegraphics[width=0.85\textwidth]{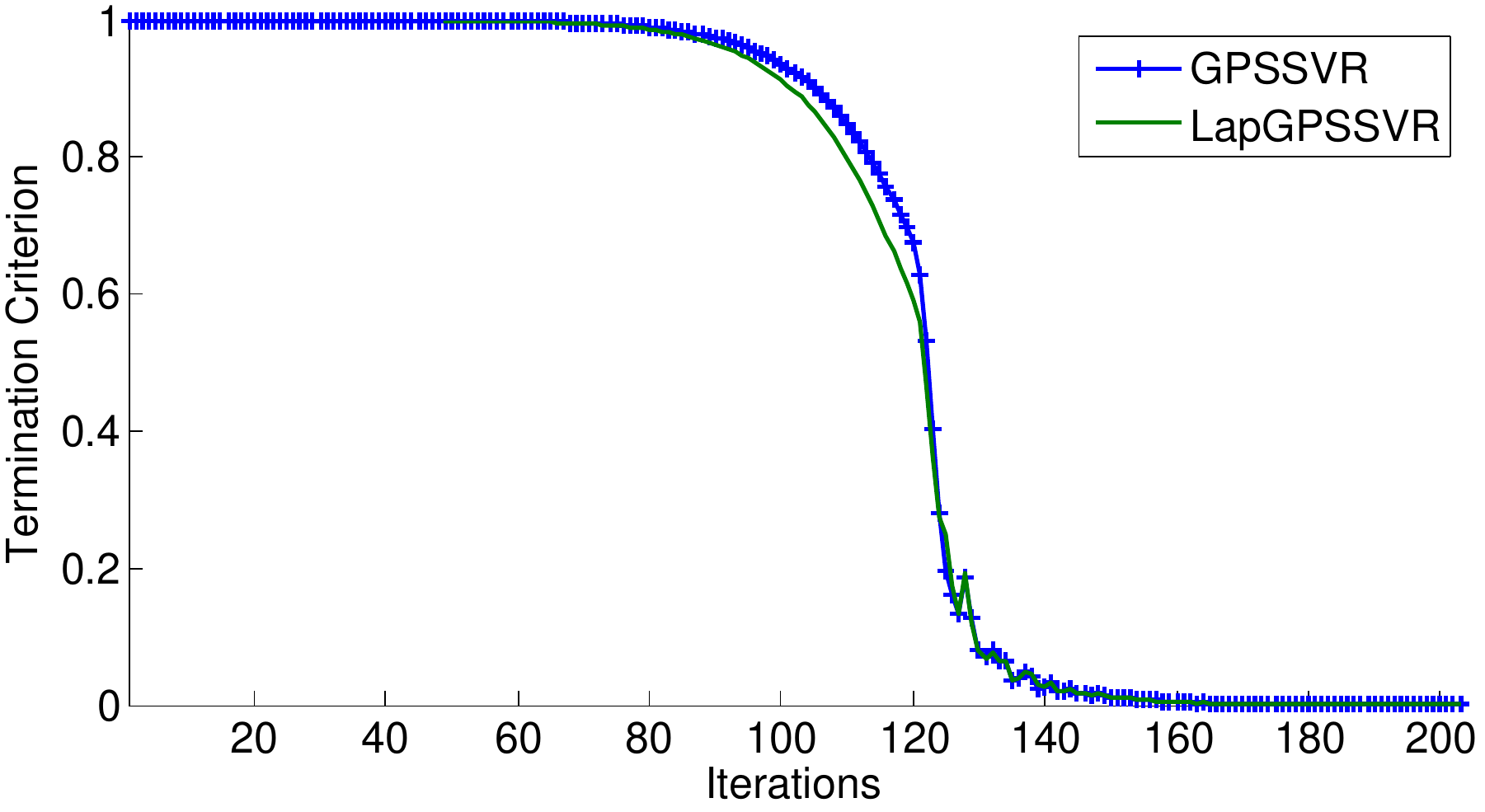}
    \end{center}
    \caption{Convergence behavior of our proposed methods on Extended Yale B dataset.}\label{covergencefig}
\end{figure*}

\section{Experiments}\label{Sec:5}
In this section, to test the effectiveness of our proposed methods, we conduct several unsupervised clustering experiments on image sets and different video datasets. One facial image dataset and the four video datasets used in our experiments are listed below: Extended Yale B face dataset
\footnote{\url{http://vision.ucsd.edu/content/yale-face-database}.},
SKIG action dataset \footnote{\url{http://lshao.staff.shef.ac.uk/data/SheffieldKinectGesture.htm}.},
Ballet video dataset \footnote{\url{https://www.cs.sfu.ca/research/groups/VML/semilatent/}.},
UCF sports dataset \footnote{\url{http://crcv.ucf.edu/data/UCF_Sports_Action.php}.},
Highway traffic dataset \footnote{\url{http://www.svcl.ucsd.edu/projects/traffic/}.}.

To demonstrate the performance of GPSSVR and LapGPSSVR methods, we compare them with several state-of-the-art clustering methods. Since our methods are related to LRR and manifold models, we mainly select LRR based methods or manifold based methods as baselines, which are listed below:
\begin{itemize}
  \item \textbf{Sparse Subspace Clustering (SSC) \cite{ElhamifarVidal2013}}: The SSC model aims to find the sparsest representation for each datum using $\ell_1$ regularization.
  \item \textbf{Low Rank Representation (LRR) \cite{LiuLinSunYuMa2013}}: The LRR model represents the holistic correlation among the data by using the nuclear norm regularization.
  \item \textbf{Low Rank Representation on Grassmann Manifolds (GLRR-F) \cite{WangHuGaoSunYin2014}}: The GLRR-F model embeds the image sets onto Grassmann manifolds and extends the LRR model to the case on the Grassmann manifolds space.
  \item \textbf{Statistical computations on Grassmann and Stiefel manifolds (SCGSM) \cite{TuragaVeeraraghavanSrivastavaChellappa2011}}: The SCGSM model explores statistical modeling methods that are derived from the Riemannian geometry of the manifold.
  \item \textbf{Sparse Manifold Clustering and Embedding (SMCE) \cite{ElhamifarVidalNips2011}}: The SMCE model utilizes the local manifold structure to find a small neighborhood around each data point and connects each point to its neighbors with appropriate weights.
  \item \textbf{Latent Space Sparse Subspace Clustering (LS3C) \cite{PatelNguyenVidal2013}}: The LS3C model describes a method that learns the projection of data and finds the sparse coefficients in the low-dimensional latent space.
\end{itemize}

In all the experiments, the input raw data are image sets derived from video clips. To represent them as Grassmann points, for a video clip with $M$ frames, denoted by $\{\mathbf Y_i\}_{i=1}^{M}$, where $\mathbf Y_i$ is the $i$-th gray frame with dimension $a\times b$,  we construct a matrix $\mathcal{Y}=[\text{vec}(\mathbf Y_1), \text{vec}(\mathbf Y_2), ..., \text{vec}(\mathbf Y_M)]$ of size $(a\times b)\times M$. Thus, a Grassmann point can be generated by any orthogonalization procedure of $\mathcal{Y}$. For convenience, we select SVD decomposition in our experiments i.e. $\mathcal{Y}=\mathbf U\Sigma \mathbf V^T$. Then we pick up the first $p$ singular-vectors of $\mathbf U$ as the representation of a Grassmann point $\mathbf X\in\mathcal{G}(p,a\times b)$. The same way can be applied to any given image set for its Grassmann representation.

To execute all the other comparing methods, we should formulate proper data representation for each of them as different methods demand different types of data inputs for clustering. The baseline subspace clustering methods, LRR and SSC, take as inputs vectorial data. They cannot be applied directly on data in form of Grassmann points. So we have to vectorize each video clip as vectorial inputs. However, a direct vectorization results in very high dimensional vectors which are hard to be handled on a normal PC. Thus we apply PCA to reduce these vectors to a low dimension which equals to the number of PCA components retaining 95\% of its variance energy. The PCA projected vectors are taken as inputs for both SSC and LRR.

For the manifold related methods, SCGSM clustering can be directly implemented on our Grassmann representation $\mathbf X\in \mathcal{G}(p,a\times b)$ for videos/image sets. Since GPSSVR, LapGPSSVR, GLRR-F methods all embed Grassmann points into symmetric matrix space, we construct the corresponding symmetric matrix $\mathbf X\mathbf X^T\in \mathbb{R}^{(a\times b)\times (a\times b)}$ as their inputs. Although SMCE and LS3C belong to manifold learning methods, they demand vectors as inputs too. However, vectorizing the Grassmann point $\mathbf X \in \mathcal{G}(p,a\times b)$ will destroy the geometry of data, hence we vectorize the correspondent symmetric matrix $\mathbf X\mathbf X^T$ as their inputs. That is, SMEC and LS3C take $vec(\mathbf X\mathbf X^T)$ as inputs for clustering.

To obtain good experimental performances, the model parameters $\lambda$, $\beta$, $r$ and $C$ should be assigned properly. First of all, we should give a better estimate to the expected rank $r$. For a Grassmann point $\mathbf X \in \mathcal{G}(p,d)$, the rank of its matrix representative $\mathbf X$, even the mapped symmetric matrix $\mathbf X\mathbf X^T$ (referring to (\ref{Grassmann2Sym_mapping})), are equal to $p$.
Considering that a mapped symmetric matrix is linearly represented by other mapped symmetric matrices, so we may expect that the expected rank $r$ of the coefficient matrix $\mathbf Z$ is relatively not larger than $p$. According to this analysis, in experiments, we tune the expected rank $r$, e.g. around $4$, to acquire the best experiment results.

$\lambda$ and $\beta$ are the most important penalty parameters for balancing the reconstructed error term, low-rank term and Laplacian regularization term. Empirically, the best value of $\lambda$ depends on the application problems and has a large range for different applications from $0.01$ to $20$. While the value of $\beta$ is usually very small, ranging from $1.0\times 10^{-4}$ to $1.0\times 10^{-2}$, because the value of Laplacian regularization term is usually many times larger than two other terms. As for $C$, which defines the neighborhood size of each Grassmann point, lots of experimental results suggest that a value slightly larger than the average value of the numbers of data in different clusters is a good choice. We tune the models for $C$ around this initial value for different applications.

In our experiments, the performance of different algorithms is measured by the following  clustering accuracy
\[
\text{Accuracy} = \frac{\text{number of correctly classified points}}{\text{total number of points}}\times 100\%.
\]

All the algorithms are coded in Matlab 2014a and implemented on an Intel Core i7-4770K 3.5GHz CPU machine with 32G RAM.

\subsection{Clustering on Face Image sets}

Face clustering is one of the hottest topics in computer vision and machine learning. Affected by various factors, i.e., expression, illumination conditions and light directions, clustering based on individual faces does not achieve great experimental performance. Therefore, we test our proposed methods for the purpose of clustering face image sets, where each face image set contains several numbers of face images of one subject.

The extended Yale B dataset is captured from $38$ subjects and each subject has $64$ front face images in different light directions and illumination conditions. All images are resized into $20\times 20$ pixels. Some face samples in extended Yale B dataset are shown in Fig. \ref{yalefig}.

To generate the experimental data, we form each facial image set by randomly choosing $M=8$ images from the same subject. We set the dimension of subspace as $p=4$ for each Grassmann point. Thus, the Grassmann point in this experiment can be denoted as $\mathbf X\in \mathcal{G}(4,20\times 20)$. The expected rank and the neighbor size are set as $r=4$ and $C=7$, respectively. And parameters $\lambda$ and $\beta$ are set as $0.59$ and $0.001$. As for baselines SSC and LRR, the vector dimension of an image set in size of $20\times 20\times 8=3200$ are reduced to $146$ by PCA.

All experimental results are shown in Table \ref{yaletab}. Compared with the manifold-based methods, i.e., SCGSM, SMCE and LS3C, the excellent performance of our proposed methods demonstrates the low rank constraint on similarity matrix $\mathbf Z$ plays an active role. Our proposed methods are also a little superior to GLRR-F, which verifies PSSV norm improves our proposed methods. The fact that the performance of SSC and LRR is greatly worse than the mentioned manifold-based baselines proves that incorporating manifold geometry is useful for clustering algorithms. Fig. \ref{rankfig} 
clearly shows that a slightly larger expected rank value may help improve the clustering accuracy.

\begin{figure*}
    \begin{center}
    \includegraphics[width=0.85\textwidth]{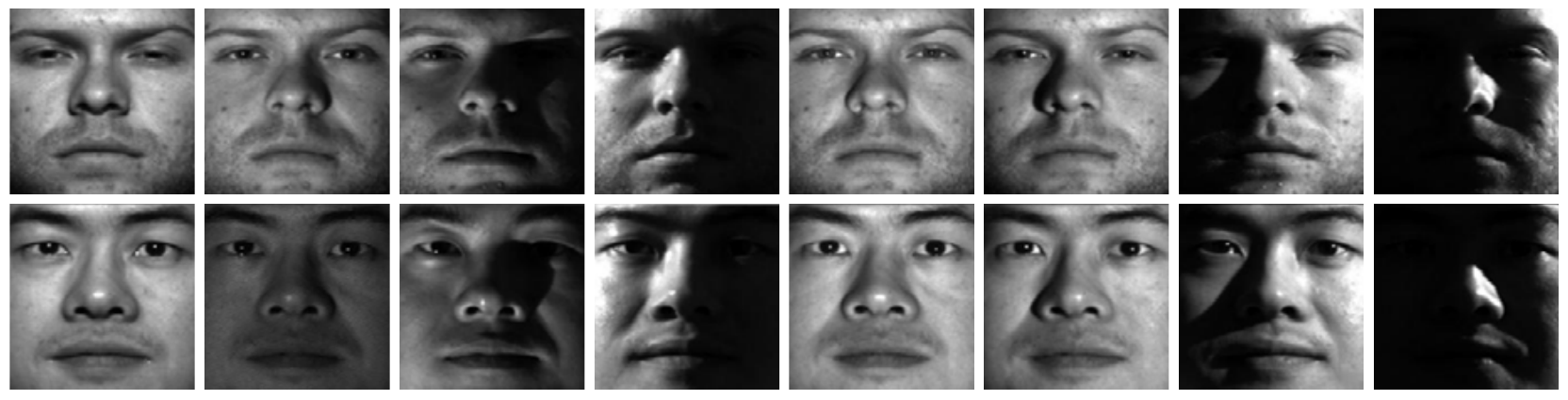}
    \end{center}
    \caption{Some samples from the extended Yale B dataset. Each row denotes an image set sample which contains $8$ face images captured from different light directions and illuminations.}\label{yalefig}
\end{figure*}

\begin{table*}
   \centering
   \caption{Subspace clustering results on the Extended Yale B dataset.\label{yaletab}}{
   \begin{tabular}{|c|c|c|c|c|c|c|c|c|}
     \hline
             Methods & GPSSVR & LapGPSSVR & GLRR-F & LRR & SSC & SCGSM & SMCE & LS3C \\
             \hline
             Yale B & \textbf{0.9024} & \textbf{0.9226} & 0.8878 & 0.2788 & 0.3109 & 0.5657 & 0.8429 & 0.6250 \\
     \hline
   \end{tabular}}
\end{table*}

\begin{figure*}
    \begin{center}
    \includegraphics[width=0.6\textwidth]{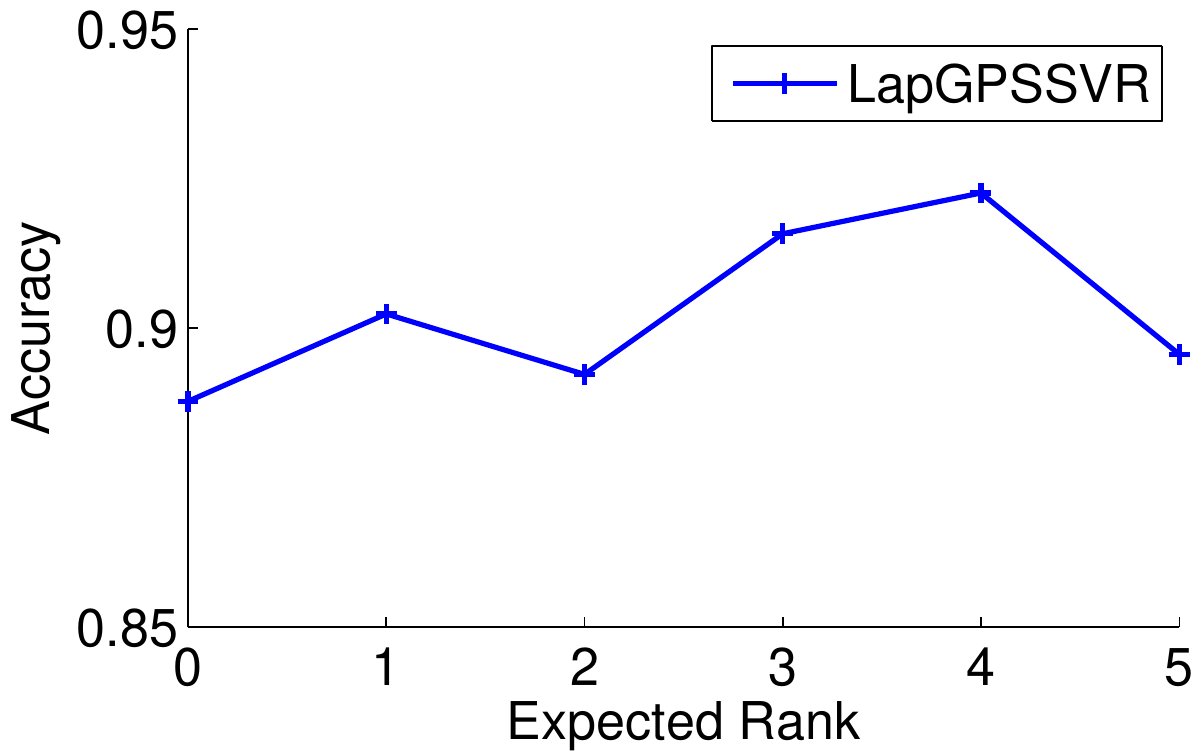}
    \end{center}
    \caption{Comparing the clustering performance of our method for the expected rank from 0 to 5 on Yale B dataset.}\label{rankfig}
\end{figure*}

\subsection{Clustering on Human Action}
GPSSVR model maintains the first $r$ singular values unconstrained to preserve the most dominant information as much as possible. At the same time, it minimizes the sum of the rest of singular values to seek a low rank global structure. Thus the proposed methods are generally more suitable for video and image sets clustering.

In the next experiment on human actions, we select two challenge action video datasets,  Ballet dataset and UCF sport dataset, to test the performance of the proposed methods. With simple backgrounds, the Ballet dataset is an appropriate benchmark choice to verify the capacity of the proposed method for action recognition in a rather ideal condition; while the UCF sport dataset containing more variations on scenario and viewpoint can be used to examine the robustness of the proposed methods in noised scenarios.

\subsubsection{Ballet Action Dataset}
This dataset \cite{FathiMori2008} contains 44 video clips, collected from an instructional ballet DVD. The dataset consists of 8 complex action patterns performed by 3 subjects. The eight actions include: `left-to-right hand opening', `right-to-left hand opening', `standing hand opening', `leg swinging', `jumping', `turning', `hopping' and `standing still'. The dataset is challenging due to the significant intra-class variations in terms of speed, spatial and temporal scale, cloth texture and movement. The frame images are normalized and centered in a fixed size of $30 \times 30$. Some frame samples of Ballet dataset are shown in Fig.~\ref{FigE4}.
\begin{figure*}
    \begin{center}
    \includegraphics[width=0.85\textwidth]{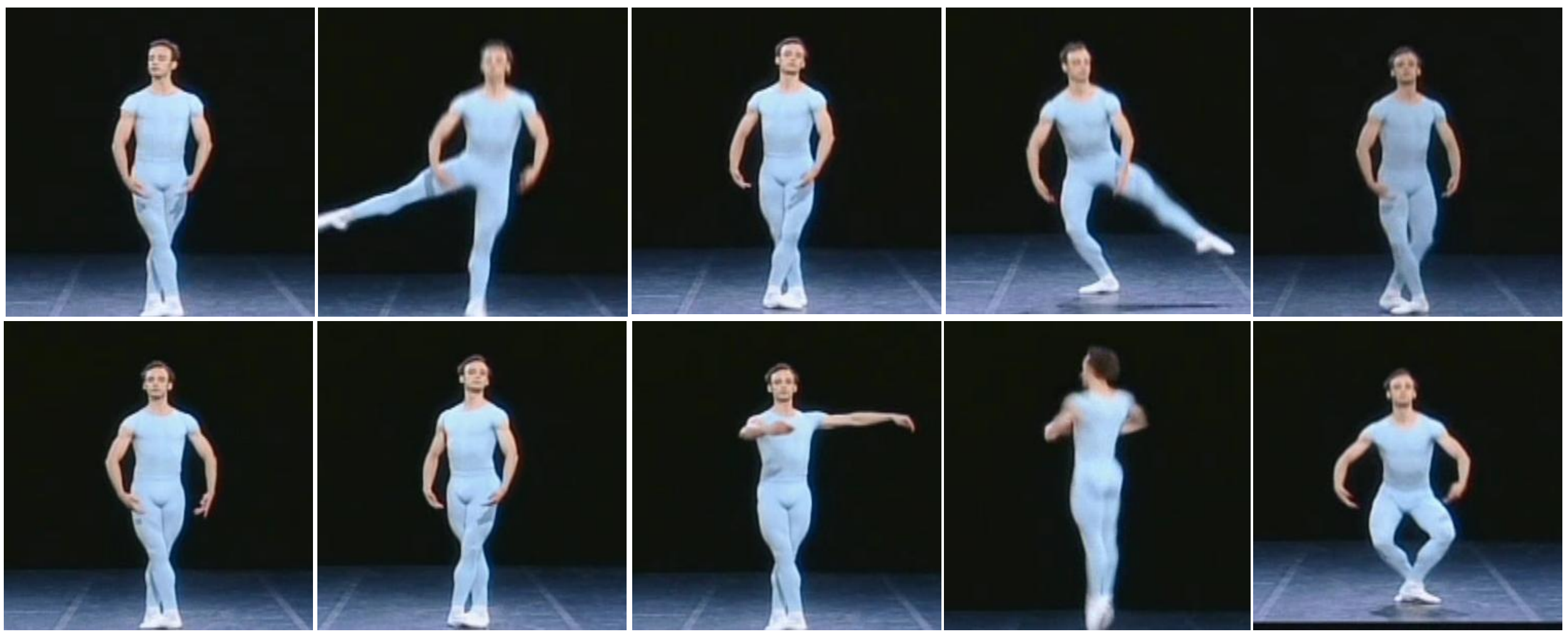}
    \end{center}
    \caption{Some samples from the Ballet Action dataset. Each row denotes a kind of action.}\label{FigE4}
\end{figure*}

We split each clip into subgroups of $M=12$ images and each subgroup is treated as an individual image set. As a result, we construct a total of $713$ image sets which are labeled as $8$ clusters. The dimension of subspace is set to $p=6$ and the Grassmann point can be represented as $\mathbf X_i \in \mathcal{G}(6,900)$. For the setting of rank $r$, we test a number of different values from $1$ to $6$ and find the best expected rank $r$ is 3 in this experiment. The neighborhood size $C$ is tuned to $90$ according to the experimental results. We set the parameters $\lambda=0.03$ and $\beta=0.003$. 
 For LRR and SSC methods, the dimension of vectors in the subspace $30\times 30 \times 12 = 10800$ is reduced to $135$ by PCA.

Table \ref{Ballettab} presents the experimental results of all the algorithms on the Ballet dataset. Although this dataset contains no complex background or illumination changes and can be regarded as a clean human action data in ideal condition, the accuracy in Table \ref{Ballettab} is not very high for the case of eight clusters. The reason is that some types of actions are too similar to each other, e.g., 'left-to-right hand opening' and 'right-to-left hand opening'. Compared with GLRR-F method which is based on the nuclear norm regularization, the proposed methods give a higher clustering accuracy. This demonstrates the benefits of minimizing partial sum of smaller singular values and leaving the $r$ largest singular values unconstrained to preserve as much discrimination information as possible.  Of course, our proposed methods are also obviously superior to other classical methods.

\begin{table*}
   \centering
   \caption{Subspace clustering results on the Ballet dataset.\label{Ballettab}}{
   \begin{tabular}{|c|c|c|c|c|c|c|c|c|}
     \hline
             Methods & GPSSVR & LapGPSSVR & GLRR-F & LRR & SSC & SCGSM & SMCE & LS3C \\
             \hline
             Ballet & \textbf{0.6059} & \textbf{0.6255} & 0.5905 & 0.2819 & 0.2903 & 0.5877 & 0.5105 & 0.4222 \\
     \hline
   \end{tabular}}
\end{table*}

\subsubsection{UCF Sports action Dataset}
This dataset \cite{RodriguezAhmedShah2008} consists of a set of actions collected from various sport matches which are typically featured on broadcast television channels. The dataset includes a total of 150 sequences. The collection has a natural pool of actions with a wide range of scenes and viewpoints, so it is difficult for clustering. There are 10 actions in this dataset: `Diving', `Golf Swing', `Kicking', `Lifting', `Riding Horse', `Running', `Skate Boarding', `Swing-Bench', `Swing-Side', and `Walking'. Each sequence has 22 to 144 frames. We convert these video clips into gray images and each image is resized into $30\times 30$.

We regard each video clip as an image set. Note that the number of frames $M$ of each video clip is various for different video clips. There are totally $150$ image sets and $10$ clusters in this experiment. We select $p=10$ as the dimension of subspace for each Grassmann point. Therefore, a Grassmann point can be represented as $\mathbf X_i \in \mathcal{G}(10,900)$. The expected rank $r$ is set to $4$ and the neighbor size $C$ is $12$. The parameters $\lambda$ and $\beta$ are set as $1.2$ and $0.004$, respectively. The PCA algorithm requires the image sets with the same number of samples, but the RGB sequences contains various frames from 22 to 144. Throwing away too many frames by averaging sampling for longer sequences in the PCA algorithm is unfair for LRR and SSC methods, so we have to give up comparing with LRR and SSC in this experiment.

The experimental results are reported in Table \ref{UCFtab}. Although this challenging dataset has complex backgrounds, viewpoints changes and scales variations, the accuracy result seems be higher than the Ballet dataset. We conclude that the bigger movement in  sport actions helps to distinguish action clusters, resulting in higher accuracy.

\begin{figure*}
    \begin{center}
    \includegraphics[width=0.85\textwidth]{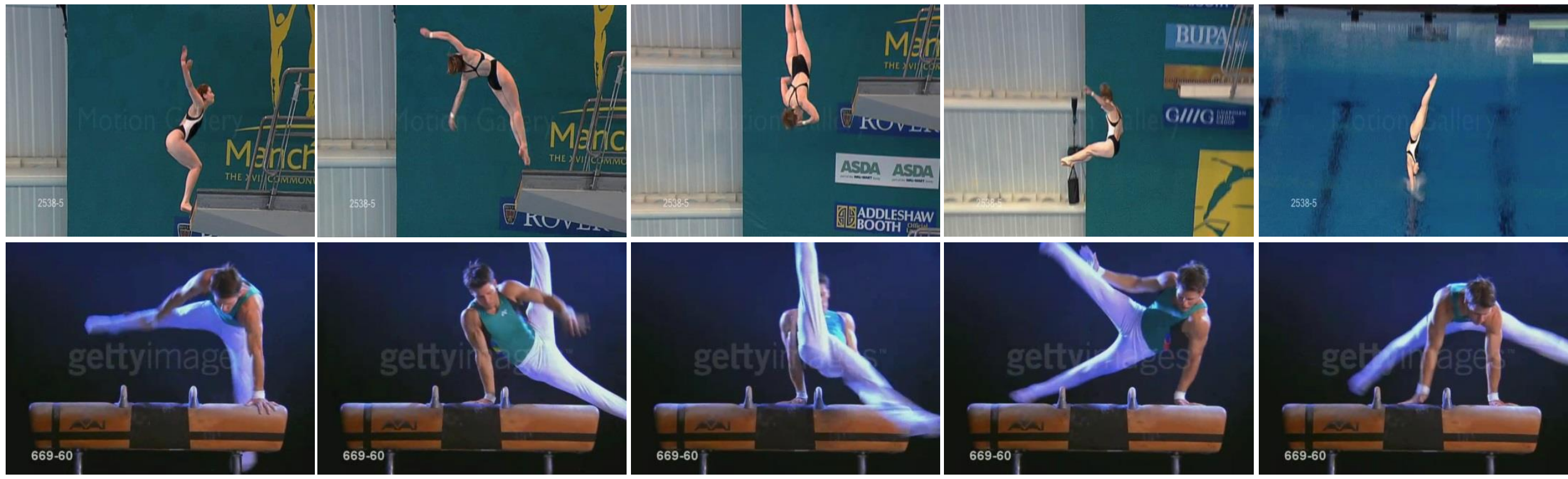}
    \end{center}
    \caption{Some samples from the UCF sports dataset and each row represents a kind of action.}\label{FigE6}
\end{figure*}

\begin{table*}
   \centering
   \caption{Subspace clustering results on the UCF sport dataset.\label{UCFtab}}{
   \begin{tabular}{|c|c|c|c|c|c|c|c|c|}
     \hline
             Methods & GPSSVR & LapGPSSVR & GLRR-F &  SCGSM & SMCE & LS3C \\
             \hline
             UCF & \textbf{0.6800} & \textbf{0.6933} & 0.6533 & 0.5333 & 0.6200 & 0.4667 \\
     \hline
   \end{tabular}}
\end{table*}

\subsection{Clustering on Gesture Action}
The SKIG dataset \cite{LiuShao2013} contains 1080 RGB-D sequences captured by a Kinect sensor. In this dataset, there are ten kinds of gestures of six persons: `circle', `triangle', `up-down', `right-left', `wave', `Z', `cross', `comehere', `turn-around', and `pat'. All the gestures are performed by fist, finger and elbow, respectively, under three backgrounds (wooden board, white plain paper and paper with characters) and two illuminations (strong light and poor light). Each RGB-D sequence contains a set of frames (63 to 605). Here the images are normalized to $24\times 32$ with mean zero and unit variance. Fig. \ref{FigE3} shows some samples of RGB images. In our experiments, we only use the RGB sequences in SKIG dataset.
\begin{figure*}
    \begin{center}
    \includegraphics[width=0.85\textwidth]{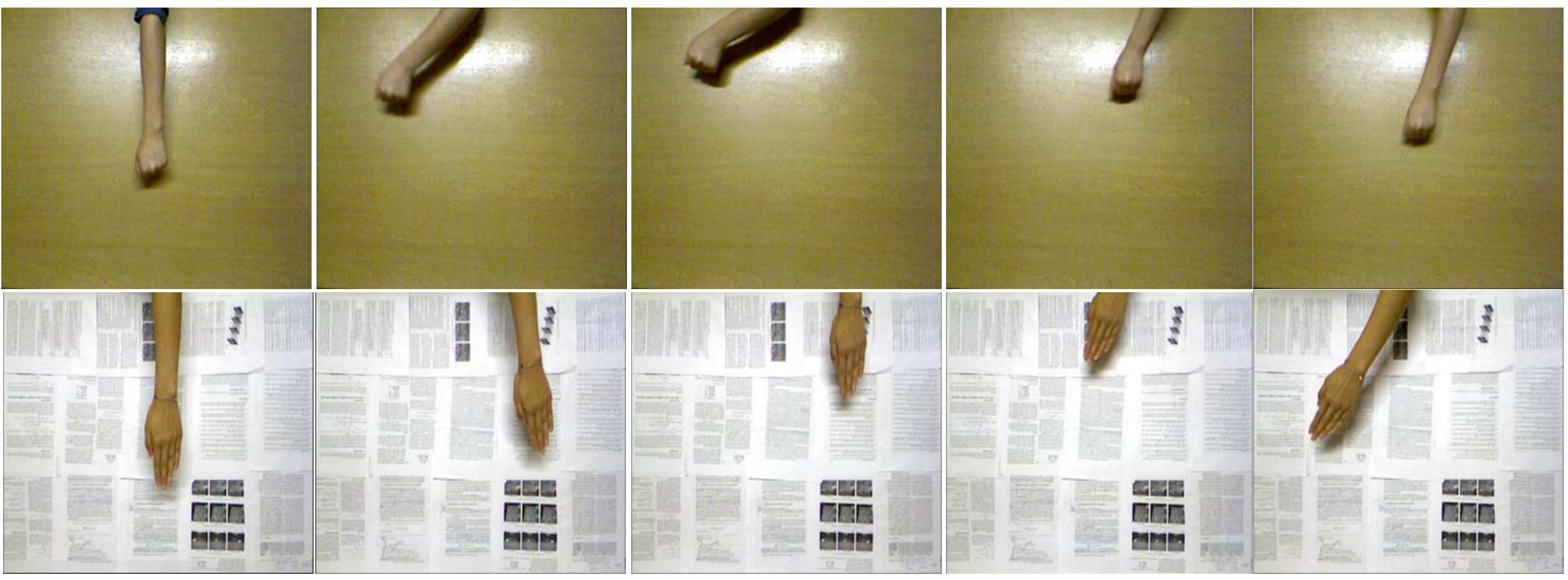}
    \end{center}
    \caption{Some samples from the SKIG Action dataset and each row shows a kind of gesture.}\label{FigE3}
\end{figure*}

Similar to the previous experiments, each video clip is considered as an image set, thus a total of $540$ image sets are labeled as $10$ clusters. We preserve $p=10$ as the dimension of subspaces, so the Grassmann point is represented as $\mathbf X_i \in \mathcal{G}(10,768)$. In this experiment, we empirically set the expected rank and the neighbor size to $r=1$ and $C=65$, respectively. And $\lambda=0.8$ and $\beta=0.009$ are chosen as experimental parameters. We did not conduct experiments for LRR and SSC due to the similar reason mentioned in the last experiment.

Table \ref{SKIGtab} presents all the experimental results on SKIG dataset. Compared with human action datasets, the movement in this gesture dataset is smaller, and the illumination and background are more variate, therefore clustering task on this dataset is more challenging. Our proposed methods, especially LapGPSSVR method, have improved clustering accuracy over all other methods. Except for the discrimination information coming from the first $r$ largest singular values,  the Laplacian regularization also provides meaningful information for clustering.

\begin{table*}
\centering
   \caption{Subspace clustering results on the SKIG dataset.\label{SKIGtab}}{
   \begin{tabular}{|c|c|c|c|c|c|c|c|c|}
     \hline
             Methods & GPSSVR & LapGPSSVR & GLRR-F &  SCGSM & SMCE & LS3C \\
             \hline
             SKIG & \textbf{0.55} & \textbf{0.5981} & 0.5056 & 0.3704 & 0.4611 & 0.4148 \\
     \hline
   \end{tabular}}
\end{table*}

\subsection{Clustering on Natural Scene}

In this experiment, we wish to inspect the proposed methods on practical applications in more complex conditions, such as Traffic Dataset. The traffic dataset \cite{ChanVasconcelos2008} used in this experiment contains 253 video sequences of highway traffic captured under various weather conditions, such as sunny, cloudy and rainy. These sequences are labeled with three traffic levels: light, medium and heavy. There are 44 clips at heavy level, 45 clips at medium level and 164 clips at light level. Each video sequence has 42 to 52 frames. The video sequences are converted to gray images and each image is normalized to size  $24 \times 24$ with mean zero and unit variance. Some samples of the Highway traffic dataset are shown in Fig.~\ref{FigE5}.
\begin{figure*}
    \begin{center}
    \includegraphics[width=0.85\textwidth]{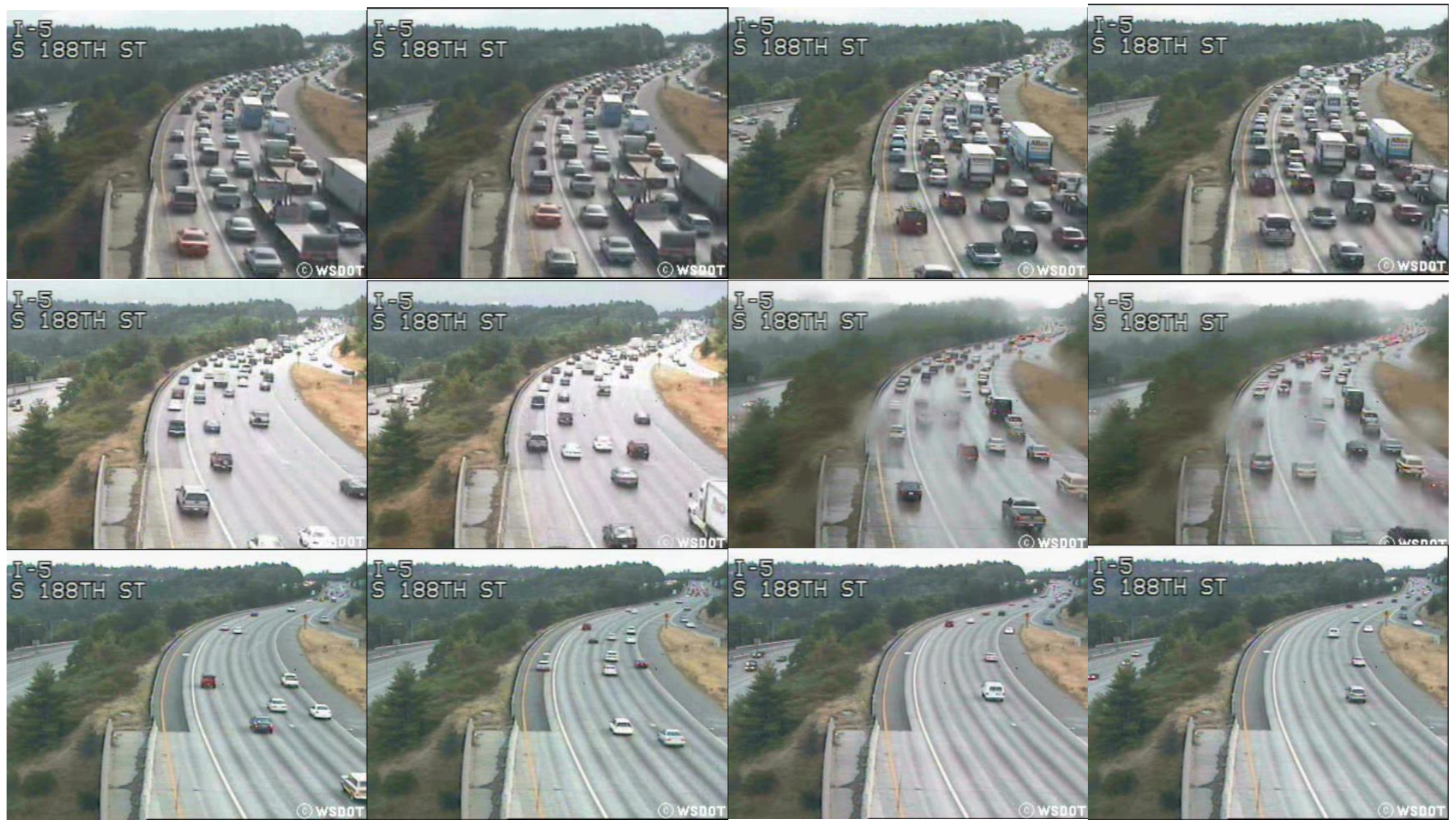}
    \end{center}
    \caption{Some samples from the Highway traffic dataset. First row illustrates heavy traffic level, second row illustrates medium traffic level and last row illustrates light traffic level.}\label{FigE5}
\end{figure*}

\begin{table*}
   \centering
   \caption{Subspace clustering results on the Traffic dataset.\label{Traffictab}}{
   \begin{tabular}{|c|c|c|c|c|c|c|c|c|}
     \hline
             Methods & GPSSVR & LapGPSSVR & GLRR-F & LRR & SSC & SCGSM & SMCE & LS3C \\
             \hline
             Traffic & \textbf{0.8617} & \textbf{0.8933} & 0.8498 & 0.6838 & 0.6285 & 0.6443 & 0.5613 & 0.6364 \\
     \hline
   \end{tabular}}
\end{table*}

The model parameter setting is described as follows. Each video clip is regarded as an image set and we generate a total of $253$ image sets labeled with 3 clusters. The Grassmann points in this experiment are chosen as $p=10$ dimension subspaces, i.e., $\mathbf X_i\in \mathcal{G}(10,576)$. We choose the expected rank and the neighbor size as $r=2$ and $C=61$. We empirically set $\lambda=2$ and $\beta=0.006$, respectively. For LRR and SSC methods, we vectorize the first $42$ frames in each clip (discarding the rest frames in the clip) and then use PCA algorithm to reduce the dimension $24\times 24 \times 42 = 24192$ to $147$.

Table \ref{Traffictab} presents the clustering result of all the algorithms. Obviously, our proposed methods get the highest accuracy $89.33\%$ which almost reaches the highest classification accuracy 92.18\% in \cite{SankaranarayananTuragaBaraniukChellappa2010}. For the real traffic applications, we can use the proposed methods to learn the different thresholds for the traffic jam levels on specific roads based on the historical traffic data. Thus it would be more accurate in classifying traffic jam levels on individual road than using the empirical uniform thresholds for all roads. So, this method is meaningful for some practical applications.

\section{Conclusion}\label{Sec:6}
In this paper, we have proposed a novel PSSVR model on Grassmann manifolds by embedding the manifold onto the space of symmetric matrices. Compared with the nuclear norm used in the LRR method, it has been proved that PSSV is a better approximation to the rank minimization problem, which is beneficial in exploring the global structure of data. We also propose efficient algorithms for the proposed methods. The computational complexity of the proposed GPSSVR method is presented, which proves that our algorithms are effective. In addition, to maintain the local structure hidden in data, we introduce a Laplacian regularization into our model. Several public video datasets are used to evaluate the performance of the proposed methods. The experimental results show that the proposed methods outperform the state-of-the-art clustering methods.

\section*{Acknowledgements}
The research project is supported by the Australian Research Council (ARC) through the grant DP140102270 and also partially supported by National Natural Science Foundation of China under Grant No. 61390510, 61672071, 61632006, 61370119, Beijing Natural Science Foundation No. 4172003, 4162010, 4152009, and Funding Project for Academic Human Resources Development in Institutions of Higher Learning Under the Jurisdiction of Beijing Municipality No.IDHT20150504.


\bibliographystyle{IEEEtran}

\section{Appendix}



Theorem \ref{Convergencethm} (CONVERGENCE). \emph{Let $S^k=(\mathbf Z^k,\mathbf J^k,\mathbf Y^k, \hat{\mathbf Y}^k)$ where $\hat{\mathbf Y}^{k+1}=\mathbf Y^k+\mu^k(\mathbf Z^k-\mathbf J^{k+1})$. If $\{\mathbf Y^k\}_{k=1}^{\infty}$ and $\{\hat{\mathbf Y}^{k}\}_{k=1}^{\infty}$ are bounded, $\lim\limits_{k\rightarrow\infty}(\mathbf Y^{k+1}-\mathbf Y^k)=0$, and $\mu^k$ is non-decreasing, then any accumulation point of $\{S^k\}_{k=1}^{\infty}$ satisfies the KKT condition. In particular, whenever $\{S^k\}_{k=1}^{\infty}$ converges, it converges to a KKT point of problem (\ref{Lap})}.

\begin{proof}

The KKT condition of problem (\ref{Lap}) is that there exists $(\mathbf Z^*,\mathbf J^*,\mathbf Y^*, \hat{\mathbf Y}^*)$ such that

\begin{equation*}
1) \ \mathbf Z^*-\mathbf J^* = 0 \nonumber; \ \ 2) \ \mathbf Y^* \in \partial\|\mathbf J^*\|_{>r}; \ \ 3) \ \mathbf Y^* \in 2\lambda\Delta - 2\lambda\mathbf Z^*\Delta
\end{equation*}

To prove Theorem 5.1, we will analyze each condition in the following sections, respectively,

1) For $\mathbf Y$. From the formula (\ref{updatingY}), we can obtain $\frac{1}{\mu^k}(\mathbf Y^{k+1}-\mathbf Y^{k}) = \mathbf Z^{k+1}-\mathbf J^{k+1}$. Since $\lim\limits_{k\rightarrow\infty}(\mathbf Y^{k+1}-\mathbf Y^k)=0$ and $\mu^k$ is non-decreasing, we have $\mathbf Z^{k+1}-\mathbf J^{k+1} = \frac{1}{\mu^k}(\mathbf Y^{k+1}-\mathbf Y^{k}) \rightarrow 0$.

2) Since $\mathbf J^{k+1}$ is obtained from the problem (\ref{LapPSSVR_subproblemJ}), by taking the derivative of $f(\mathbf Z^k, \mathbf J, \mathbf Y^k, \mathbf \mu^k)$,  we have
\begin{equation}
\begin{aligned}
0 &\in \partial\|\mathbf J^{k+1}\|_{>r}-\mathbf Y_k -\mu^k(\mathbf Z^k-\mathbf J^{k+1})  \\
  &= \partial\|\mathbf J^{k+1}\|_{>r}-\mathbf Y_k -\mu^k(\mathbf Z^{k+1}-\mathbf J^{k+1}) -\mu^k(\mathbf Z^k - \mathbf Z^{k+1})  \\
  &(\text{where} \ \ \mathbf Y^{k+1}=\mathbf Y^k + \mu^k(\mathbf Z^{k+1}-\mathbf J^{k+1})) \\
  &=\partial\|\mathbf J^{k+1}\|_{>r}-\mathbf Y^{k+1}-\mu^k(\mathbf Z^k - \mathbf Z^{k+1})  \\
  &\Rightarrow \mathbf Y^{k+1}+\mu^k(\mathbf Z^k-\mathbf Z^{k+1}) \in \partial \|\mathbf J^{k+1}\|_{>r}
\end{aligned}
\end{equation}

Due to $\{\mathbf Y^k\}_{k=1}^\infty$ and $\{\hat{\mathbf Y}^k\}_{k=1}^\infty$ are bounded, there mush be a scalar $c>0$ such that $\|\mathbf Y^{k+1}\|_F\leq c$ and $\|\hat{\mathbf Y}^{k+1}\|_F\leq c$.

\begin{equation}
\begin{aligned}
\mathbf Y^{k+1}-\hat{\mathbf Y}^{k+1} &= \mu^k(\frac{1}{\mu^k}(\mathbf Y^{k+1}-\mathbf Y^k)-(\mathbf Z^k-\mathbf J^{k+1}))  \\
&= \mu^k((\mathbf Z^{k+1}-\mathbf J^{k+1})-(\mathbf Z^k-\mathbf J^{k+1})) \\
&= \mu^k(\mathbf Z^{k+1}-\mathbf Z^k),
\end{aligned}
\end{equation}
so,
\begin{equation}
\begin{aligned}
\|\mathbf Z^{k+1}-\mathbf Z^k\|_F &= \frac{1}{\mu^k}\|\mathbf Y^{k+1}-\hat{\mathbf Y}^{k+1}\|_F  \\
&\leq \frac{1}{\mu^k}(\|\mathbf Y^{k+1}\|_F+\|\hat{\mathbf Y}^{k+1}\|_F) \\
&\leq \frac{2c}{\mu^k}\rightarrow 0. \ (\text{since} \ \mu^k \ \text{is no-decreasing})
\end{aligned}
\end{equation}

Finally, we can get $\mathbf Y^{k+1}\in\partial\|\mathbf J^{k+1}\|_{>r}$.

3) Since $\mathbf Z^{k+1}$ is obtained the problem (\ref{LapPSSVR_subproblemZ}), by taking the derivative of $f(\mathbf Z, \mathbf J^{k+1}, \mathbf Y^k, \mathbf \mu^k)$, we have
\begin{equation}
\begin{aligned}
 0 &\in -2\lambda\Delta + 2\lambda\mathbf Z^{k+1}\Delta +\mathbf Y^k + \mu^k(\mathbf Z^{k+1}-\mathbf J^{k+1}) \\
   &= -2\lambda\Delta + 2\lambda\mathbf Z^{k+1}\Delta +\mathbf Y^{k+1} \\
   &\Rightarrow \mathbf Y^{k+1} \in 2\lambda\Delta - 2\lambda\mathbf Z^{k+1}\Delta ,
\end{aligned}
\end{equation}

Therefore, our proposed method is converged. As for the detailed proof, please refer to the proof of Proposition 1 in \cite{OhTaiBazinKimKweon2015}

\end{proof}


\begin{IEEEbiography}[{\includegraphics[width=1in,height=1.25in,clip,keepaspectratio]{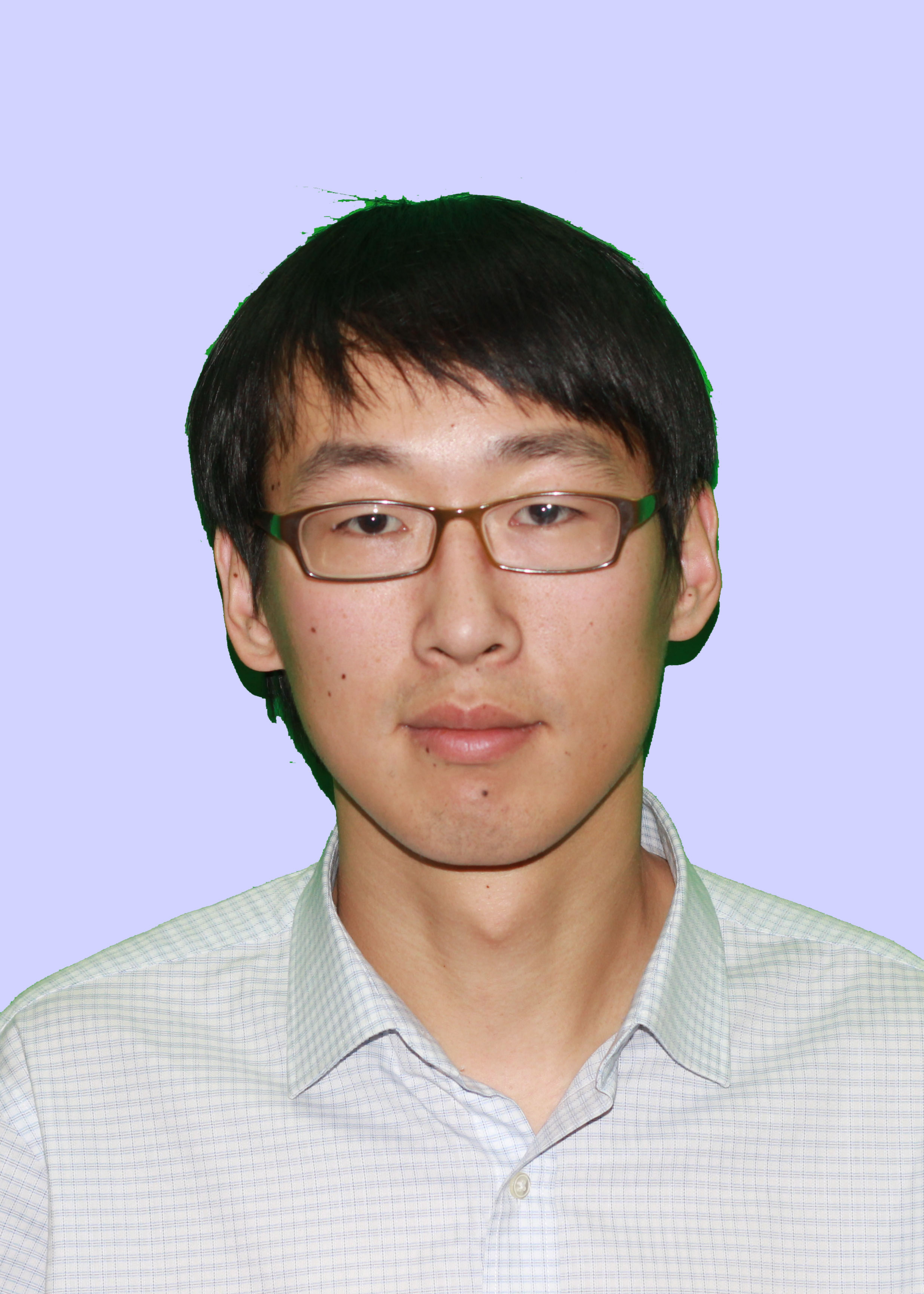}}]
{Boyue Wang} received the B.Sc. degree from Hebei University of Technology,
Tianjin, China, in 2012. he is currently pursuing the
Ph.D. degree in the Beijing Municipal Key Laboratory of Multimedia and Intelligent Software Technology,
Beijing University of Technology, Beijing.
His current research interests include computer
vision, pattern recognition, manifold learning and kernel methods.
\end{IEEEbiography}

\begin{IEEEbiography}[{\includegraphics[width=1in,height=1.25in,clip,keepaspectratio]{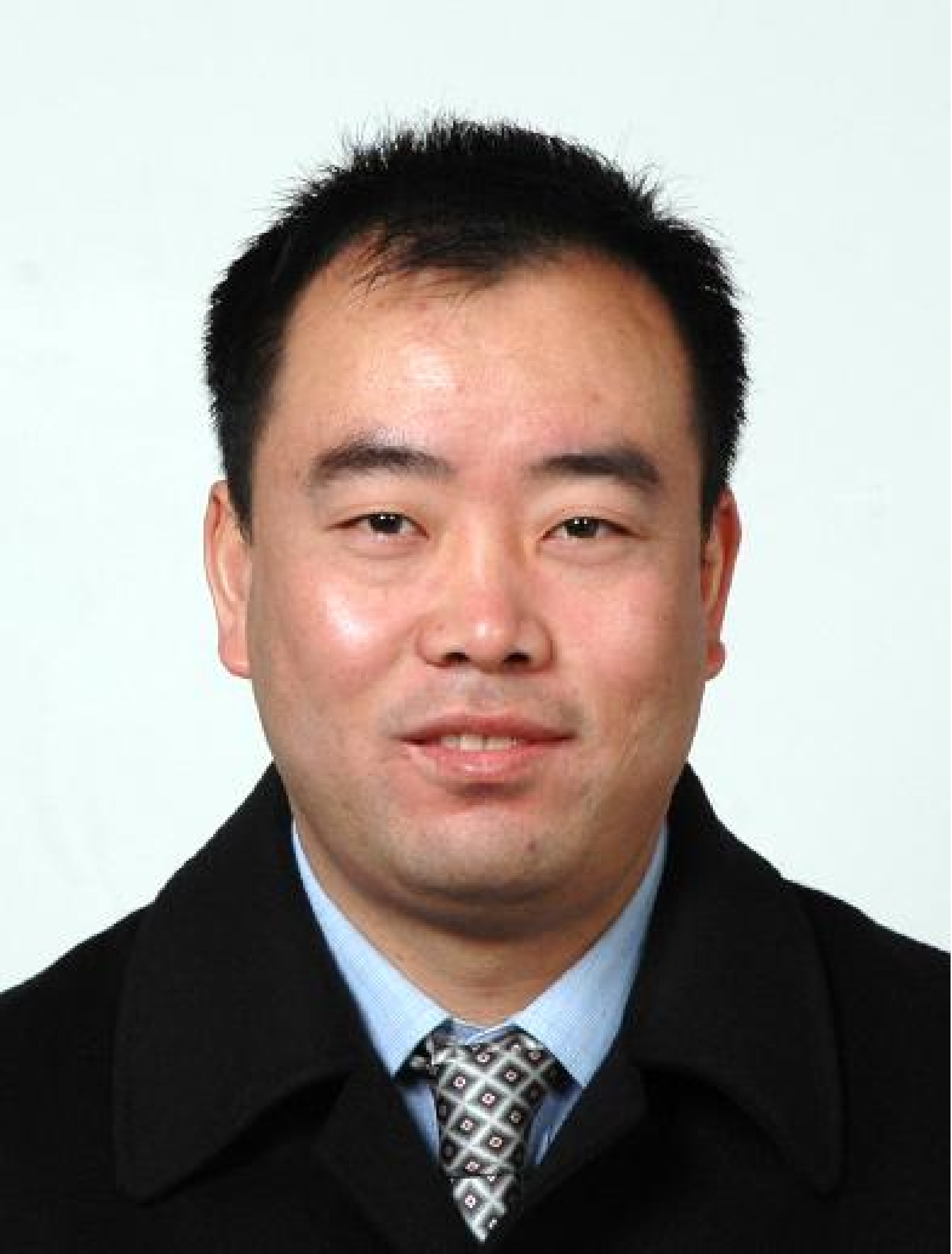}}]
{Yongli Hu} received his Ph.D. degree from Beijing University of Technology in 2005. He is a professor in College of Metropolitan Transportation at Beijing University of Technology. He is
a researcher at the Beijing Municipal Key Laboratory of Multimedia and Intelligent Software Technology.
His research interests include computer graphics, pattern recognition and multimedia technology.
\end{IEEEbiography}

\begin{IEEEbiography}[{\includegraphics[width=1in,height=1.25in,clip,keepaspectratio]{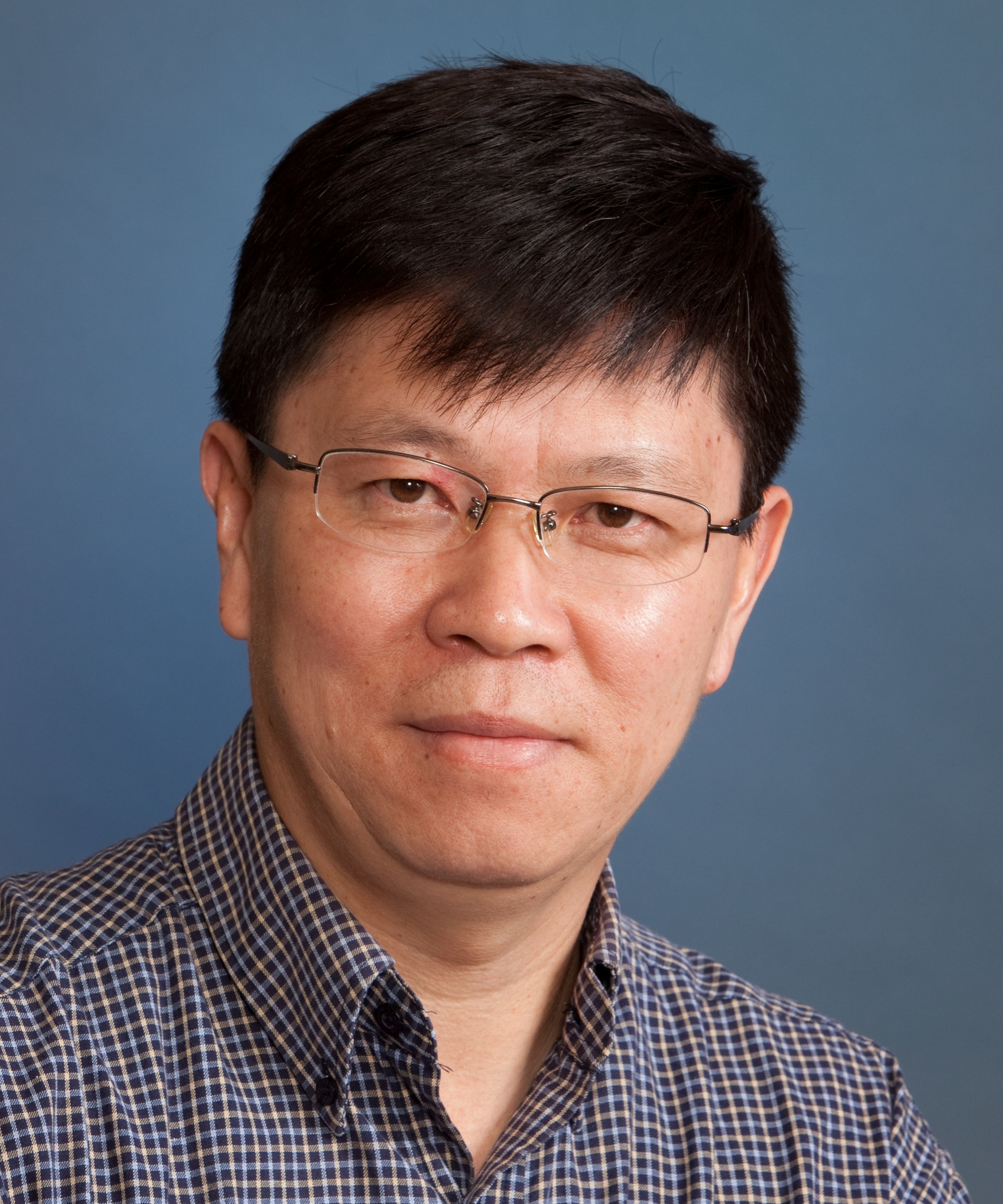}}]
{Junbin Gao} graduated from Huazhong University of Science and Technology (HUST),
China in 1982 with BSc. degree in Computational Mathematics and
obtained PhD from Dalian University of Technology, China in 1991. He is a Professor of Big Data Analytics in the University of Sydney Business School at the University of Sydney and was a Professor in Computer Science
in the School of Computing and Mathematics at Charles Sturt
University, Australia. He was a senior lecturer, a lecturer in Computer Science from 2001 to 2005 at
University of New England, Australia. From 1982 to 2001 he was an
associate lecturer, lecturer, associate professor and professor in
Department of Mathematics at HUST. His main research interests
include machine learning, data analytics, Bayesian learning and
inference, and image analysis.
\end{IEEEbiography}

\begin{IEEEbiography}[{\includegraphics[width=1in,height=1.25in,clip,keepaspectratio]{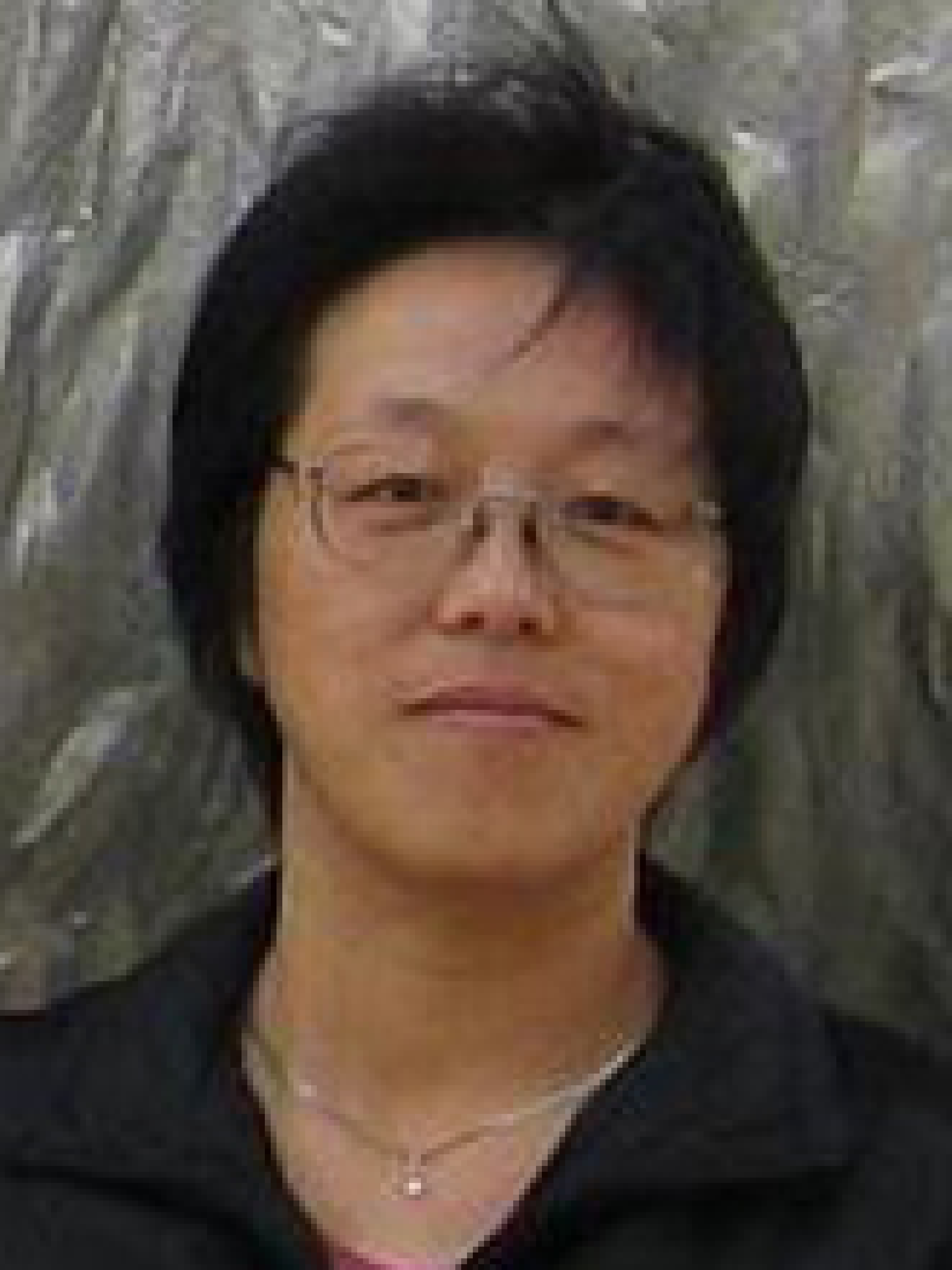}}]
{Yanfeng Sun} received her Ph.D. degree from Dalian University of Technology in 1993. She is a professor in College of Metropolitan Transportation at Beijing University of Technology. She is
a researcher at the Beijing Municipal Key Laboratory of Multimedia and Intelligent Software Technology. She is the membership of China Computer Federation.
 Her research interests are multi-functional perception and image processing.
\end{IEEEbiography}

\begin{IEEEbiography}[{\includegraphics[width=1in,height=1.25in,clip,keepaspectratio]{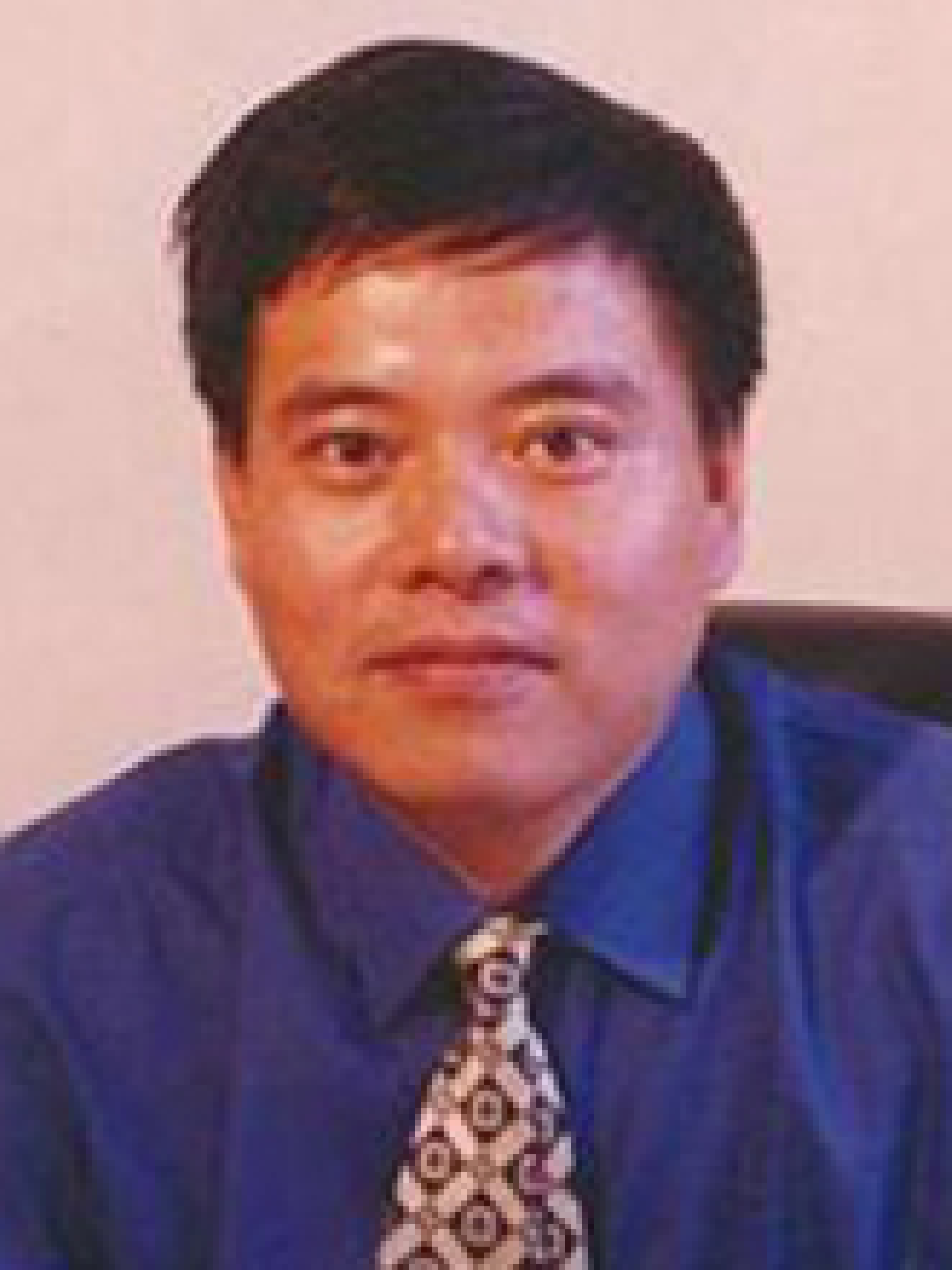}}]
{Baocai Yin} received his Ph.D. degree from Dalian University of Technology in 1993.
He is a Professor in the College of Computer Science and Technology, Faculty of Electronic Information and Electrical Engineering, Dalian University of Technology. He is
a researcher at the Beijing Municipal Key Laboratory of Multimedia and Intelligent Software Technology.
He is a member of China Computer Federation. His
research interests cover multimedia, multifunctional perception, virtual reality and computer graphics.
\end{IEEEbiography}
\vfill

\end{document}